\documentclass{article}

\usepackage[preprint]{neurips_2026}

\usepackage{amsmath,amsfonts,bm}

\def\eqref#1{equation~\ref{#1}}

\def\plaineqref#1{\ref{#1}}

\def\1{\bm{1}}

\DeclareMathAlphabet{\mathsfit}{\encodingdefault}{\sfdefault}{m}{sl}
\SetMathAlphabet{\mathsfit}{bold}{\encodingdefault}{\sfdefault}{bx}{n}

\usepackage[hypertexnames=false]{hyperref}
\usepackage{url}
\hypersetup{
  hidelinks,
  pdftitle={Safe to Stop? Risk-Constrained Stopping for Sequential Clinical Diagnosis Agents},
  pdfauthor={Yuexin Wu, Vasile Rus},
  pdfsubject={GenAI4Health 2026 Research Paper Draft}
}
\usepackage{booktabs}
\usepackage{multirow}
\usepackage{amsmath,amssymb,amsthm}
\usepackage{xcolor}
\usepackage{enumitem}
\usepackage{graphicx}
\usepackage{array}
\usepackage{float}

\newtheorem{theorem}{Theorem}
\newtheorem{proposition}{Proposition}

\newcommand{\cros}{\textsc{Cros}}
\newcommand{\crosdet}{\textsc{Cros}-Det}
\newcommand{\crosmix}{\textsc{Cros}-Mix}
\newcommand{\singlecandidate}{Single-candidate test}
\newcommand{\fixedsequenceltt}{Fixed-sequence LTT}
\newcommand{\holmtesting}{Holm testing}
\newcommand{\bonferronitesting}{Bonferroni testing}
\newcommand{\uniformmixture}{Uniform-weight mixture}
\newcommand{\nohistoryranker}{No-history ranker}
\newcommand{\nativescoreranker}{Native-score ranker}
\newcommand{\risk}{\mathcal{R}}
\newcommand{\coverage}{\mathcal{C}}
\newcommand{\cost}{\mathcal{J}}

\title{Safe to Stop? Risk-Constrained Stopping\\
for Sequential Clinical Diagnosis Agents}

\author{
  Yuexin Wu \\
  Department of Computer Science \\
  University of Memphis \\
  \texttt{ywu10@memphis.edu}
  \And
  Vasile Rus \\
  Department of Computer Science \\
  University of Memphis \\
  \texttt{vrus@memphis.edu}
}

\begin{document}

\maketitle

\begin{abstract}
Clinical diagnosis agents must decide not only what test to request next, but also when to diagnose or defer. Existing agent benchmarks largely evaluate accuracy after fixed or unconstrained interaction, leaving autonomous stopping reliability implicit. We present \cros{}, a risk-constrained stopping layer combining state-wise error ranking, policy design on disjoint development splits, and LTT-style exact tests of selective diagnostic error and minimum autonomous coverage for complete sequential policies. Its finite-sample guarantee requires the candidate family, testing rule, and any randomization to be frozen before calibration labels are accessed. On a 1,834-episode MIMIC-derived abdominal-pain benchmark, the full ranker achieves exploratory state-error AUROC 0.853, compared with 0.715 for maximum class probability and 0.552 for the backbone's native stop score. On the previously viewed 367-episode evaluation split, analytically averaging over the frozen \crosmix{} weights yields 16.9\% selective error at 78.8\% coverage, cost 5.57, and 0.68 tests, versus 30.8\% error at 100\% coverage, cost 8.14, and 1.53 tests under native stopping. Forced continuation is non-monotone: error is 28.3\% with HPI alone and 34.3\% after full workup. However, the uniform-weight mixture ablation is cheaper on this viewed split despite missing the locked development margins, and \crosmix{} nominally satisfies the joint criterion in only 6 of 20 development resplits. Because evaluation labels were inspected during earlier development, these findings provide exploratory feasibility and audit evidence, not a confirmatory safety certificate.
\end{abstract}

\section{Introduction}

Language-model agents can generate differential diagnoses, request tests, read new findings, and revise their hypotheses. Recent systems and benchmarks have made this interaction increasingly realistic \citep{hager2024limitations,schmidgall2024agentclinic,liu2024medchain,baniharouni2026lacdm}. Yet a central decision remains weakly specified: when should the agent stop acquiring information and commit to an autonomous diagnosis? Stopping too early can miss a consequential disease; stopping too late wastes tests and may expose patients to avoidable procedures. Uncalibrated confidence thresholds provide no finite-sample joint risk--coverage guarantee, while unconstrained empirical cost minimization can overfit the selection sample \citep{angelopoulos2025ltt,laufergoldshtein2023pareto}.

We formulate sequential diagnosis as selective, risk-constrained stopping. At each stage the agent observes the history, proposes a diagnosis and next test, and the stopping layer either accepts the diagnosis, continues the shared acquisition trajectory, or defers. The desired policy minimizes resource cost subject to two population constraints: conditional diagnostic error among autonomous decisions is at most $\alpha$, and autonomous coverage is at least $\gamma$. This differs from ordinary selective classification \citep{geifman2017selective,geifman2019selectivenet}: prediction quality and the information state both evolve over an agent trajectory.

Our method, \cros{} (Clinical Risk-constrained Optimal Stopping), separates representation, design, and calibration. Here, ``optimal'' means cost-minimizing within a finite, selection-frozen family of threshold-and-horizon policies; \cros{} does not solve unrestricted Bellman optimal stopping or optimize the backbone's test-acquisition policy. First, a risk ranker scores the likelihood that the backbone's current diagnosis is wrong. Second, a selection split freezes the candidate family and, optionally, a randomized mixture chosen to minimize expected cost. Third, a prospectively fresh calibration split can supply exact binomial $p$-values for error and coverage. Multiple testing procedures from learn-then-test (LTT) \citep{angelopoulos2025ltt} convert candidate-wise tests into a finite-sample guarantee. The stopping layer is backbone-agnostic and auditable: each action, observation, risk score, stop/continue/defer outcome, and calibration statistic is logged. In the present study, previously opened labels mean that the same calculations are exploratory calibration checks, not a realized prospective guarantee.

We make four contributions:
\begin{enumerate}[leftmargin=*,itemsep=1pt,topsep=2pt]
    \item We adapt and operationalize LTT-style joint testing of selective diagnostic error and minimum autonomous coverage for complete sequential stopping policies, obtaining an exact finite-sample guarantee when deterministic or episode-wise randomized policies are frozen before calibration.
    \item We instantiate episode-wise randomized mixtures of deterministic stopping policies; Proposition~\ref{prop:sparse-validity} shows that an optimal basic feasible mixture requires at most three component policies and remains compatible with exact calibration when randomized independently across episodes. 
    \item We construct an auditable retrospective benchmark with 1,834 MIMIC-derived ED episodes, 12 nonuniform-cost actions, recorded-result missingness, and a common-backbone comparison that isolates stopping from diagnosis and test-proposal quality.
    \item We report both favorable and negative evidence: the full ranker out-ranks simple scores but weakens under diagnosis-language masking, forced continuation is non-monotone, optimized mixing does not dominate uniform mixing on evaluation, 20-resplit feasibility is unstable, and aggregate control does not ensure subgroup safety.
\end{enumerate}

\section{Related Work}

\paragraph{Clinical decision agents.}
MIMIC-CDM evaluates LLMs on sequential clinical decision making and exposes limitations in diagnostic reasoning and tool use \citep{hager2024limitations}. AgentClinic provides a multimodal simulated clinical environment \citep{schmidgall2024agentclinic}; MedChain emphasizes interactive, sequential clinical benchmarking \citep{liu2024medchain}; and DxChain uses panoramic profiling and adversarial debate \citep{lv2026dxchain}. LA-CDM trains hypothesis and decision agents with reinforcement learning to choose tests and update diagnoses \citep{baniharouni2026lacdm}. These systems motivate the same acquisition loop as our benchmark. Our goal is complementary: we hold the diagnostic backbone and its action trajectory fixed when possible, then evaluate whether a stopping controller can provide a testable population guarantee.

\paragraph{Resource-aware sequential diagnosis.}
MAI-DxO evaluates diagnostic accuracy jointly with the cost of adaptively requested tests \citep{nori2025sequential}. ACTMED uses Bayesian experimental design to select the next test \citep{estevez2025timely}, cost-sensitive reinforcement learning learns adaptive test-panel policies \citep{yu2023deep}, and latent diagnostic trajectory learning trains planning and diagnostic agents to acquire evidence along learned paths \citep{shen2026latent}. These methods can change which tests are selected and therefore change the trajectory. \cros{} is not an end-to-end acquisition method: its common-path design holds backbone diagnoses, test proposals, and forced-continuation trajectories fixed to isolate whether the controller stops, continues with the backbone-proposed test, or defers.

\paragraph{Selective prediction and risk control.}
Selective classifiers abstain on uncertain examples to trade coverage for conditional error \citep{geifman2017selective,geifman2019selectivenet}. Distribution-free risk-controlling prediction sets and conformal risk control extend calibration beyond marginal coverage \citep{bates2021rcps,angelopoulos2024crc}. LTT turns risk constraints into hypothesis tests and controls the probability of selecting an invalid procedure from a finite family \citep{angelopoulos2025ltt}. Geometry-Calibrated Conformal Abstention gives finite-sample guarantees for both participation and correctness of emitted open-ended language-model responses \citep{xu2026geometry}, while SCoRE uses conformal e-values to control a general bounded risk among selected outputs \citep{bai2026score}. Thus neither selective risk control nor participation guarantees are new in isolation. We adapt these ideas to a stateful clinical stopping problem whose complete, frozen policy is jointly tested for conditional diagnostic error and a lower autonomous-coverage bound, with resource costs used for policy design and comparison rather than included in the validity claim. The exact guarantee concerns the policy chosen before calibration, not the accuracy of the learned risk score.

\paragraph{Clinical stopping and abstention.}
Uncertainty-aware abstention has been studied for static medical-text prediction \citep{vazhentsev2025abstention}, while Safe-Psych evaluates diagnose, clarify, and abstain decisions as psychiatric evidence is revealed incrementally \citep{presacan2026safepsych}. MediQ studies interactive clinical diagnosis in which a model refrains from diagnosing under insufficient information and asks follow-up questions, but does not provide finite-sample joint control of selective diagnostic error and autonomous coverage \citep{li2024advances}. Foo and Chang formulate staged clinical prediction as an expected-loss optimal-stopping problem with explicit decision and testing costs and Bellman recursion \citep{foo2026optimal}. Their objective optimizes whether expected decision value justifies further testing; it does not provide \cros{}'s exact joint test of conditional diagnostic error and minimum autonomous coverage. \cros{} instead freezes a complete sequential policy selected on disjoint development data and then subjects that policy to the joint test. This distinction is methodological, not a claim that Bellman stopping, interactive clinical diagnosis, or clinical abstention is new.

\paragraph{Calibrated sequential stopping and acquisition.}
LTT-style finite-sample risk control is not new to this work. Pareto Testing combines multi-objective design and multiple testing \citep{laufergoldshtein2023pareto}, while accumulated-accuracy-gap control gives distribution-free stopping rules for early time classification \citep{ringel2024early}. MiCP allocates error budgets across turns of retrieval, tool-use, and reasoning workflows, enabling adaptive early stopping with overall conformal coverage while reducing turns and inference cost \citep{zhou2026adaptive}. Its guarantee concerns coverage of the final prediction set in multi-turn reasoning; \cros{} instead tests selective diagnostic error and minimum autonomous coverage for a frozen stop--continue--defer controller on a common clinical trajectory. Other recent work studies selective conformal risk control \citep{xu2025selective}, inference-time reasoning under a compute budget \citep{wang2026conformal}, and post-acquisition recalibration when additional evidence can be requested \citep{xu2026look}. These studies already establish important forms of sequential calibration, joint utility--risk design, or cost-aware acquisition. Our narrower contribution is their integration into sequential clinical diagnosis: a state-wise clinical risk ranker, a lower autonomous-coverage constraint, a common-path MIMIC benchmark, and episode-wise mixtures. The sparsity of the mixture is a standard linear-program consequence rather than a new optimization theorem.

\section{Risk-Constrained Sequential Diagnosis}

\subsection{Problem setup}

An episode $Z=(X_0,Y,O_{1:H})$ contains an initial presentation $X_0$, a reference diagnosis $Y\in\{1,\ldots,K\}$, and potential recorded observations along a maximum horizon $H$. Here, $H$ is the maximum number of test-acquisition stages available in an episode ($H=12$ in our benchmark). A policy-specific horizon $h\leq H$ determines the latest stage at which that policy must stop or defer. At stage $t$, the backbone has history $S_t=(X_0,A_{1:t},O_{1:t})$, produces a diagnosis $\widehat{Y}_t$, and proposes the next test $A_{t+1}\in\mathcal{A}$. A stopping controller $\pi$ maps the observed history and the backbone's current proposal to \textsc{stop}, \textsc{continue}, or \textsc{defer}. Continuing reveals the next recorded result along the backbone-proposed trajectory; \cros{} controls whether that proposal is executed but never substitutes a different test. This common-path interaction is summarized in Figure~\ref{fig:pipeline}. Let $T_\pi$ denote the terminal stage at which the controller either accepts a diagnosis or defers, and let $D_\pi(Z)=1$ if it returns an autonomous diagnosis and $D_\pi(Z)=0$ if it defers. Define
\begin{align}
  \risk(\pi)
  &= \Pr\!\left(\widehat{Y}_{T_\pi}\neq Y \mid D_\pi=1\right), \\
  \coverage(\pi)
  &= \Pr(D_\pi=1), \\
  \cost(\pi)
  &= \mathbb{E}\!\left[
       \sum_{t<T_\pi} c(A_{t+1})
       + c_{\mathrm{def}}(1-D_\pi)
     \right].
\end{align}
These quantities separate diagnostic reliability, autonomous participation, and resource use. Specifically, $\risk(\pi)$ is the diagnostic error rate among cases that the controller handles autonomously, rather than among all episodes. $\coverage(\pi)$ is the population fraction receiving an autonomous diagnosis; the remaining fraction is deferred. Finally, $\cost(\pi)$ is the expected cumulative cost of requested actions, plus a downstream-review penalty $c_{\mathrm{def}}$ whenever the case is deferred.

We seek the least costly policy in the candidate family $\Pi$ while requiring its selective diagnostic error to be at most $\alpha$ and its autonomous coverage to be at least $\gamma$:
\begin{equation}
  \min_{\pi\in\Pi}\ \cost(\pi)
  \quad \text{s.t.}\quad
  \risk(\pi)\leq\alpha,\qquad
  \coverage(\pi)\geq\gamma .
  \label{eq:objective}
\end{equation}
Thus, $\alpha$ specifies the maximum tolerated error rate among autonomous diagnoses, whereas $\gamma$ prevents the controller from achieving low error merely by deferring most cases. Because conditional risk is undefined when $\coverage(\pi)=0$, the positive coverage constraint also excludes the degenerate always-defer policy. The defer penalty affects policy design and cost comparisons, but it is not part of the subsequent binomial tests of risk and coverage.

\begin{figure}[t]
\centering
\setlength{\fboxsep}{5pt}
\small
\fbox{\parbox{0.17\linewidth}{\centering Initial presentation\\and triage}}
$\rightarrow$
\fbox{\parbox{0.18\linewidth}{\centering Backbone proposes\\diagnosis and test}}
$\rightarrow$
\fbox{\parbox{0.18\linewidth}{\centering Logged result\\updates state}}
$\rightarrow$
\fbox{\parbox{0.20\linewidth}{\centering \cros{} risk check:\\stop, continue, or defer}}
\caption{A sequential episode. The backbone generates the common forced-continuation trajectory and proposes each test; \cros{} controls whether to stop, continue with that proposal, or defer. It does not select the test identity.}
\label{fig:pipeline}
\end{figure}

\subsection{Risk-ranked stopping policies}

We train an auxiliary estimator $r_\theta(S_t)\in[0,1]$ for the event $\widehat{Y}_t\neq Y$. Features include the $K$ class probabilities, maximum probability, probability margin, entropy, stage, fraction of missing results, cumulative resource cost, latency, and the backbone's native stop score. Training uses episode-wise out-of-fold predictions so that multiple states from one patient never cross folds.

For a horizon $h$ and threshold $\tau$, the deterministic policy $\pi_{h,\tau}$ stops at the first $t\leq h$ satisfying $r_\theta(S_t)\leq\tau$ and otherwise defers at $h$. A disjoint selection set freezes the estimator, a finite candidate list $\Pi_0=\{\pi_1,\ldots,\pi_L\}$, candidate order, and all design hyperparameters before calibration labels are accessed. The learned ranker may be misspecified; validity below depends only on a fresh exchangeable calibration sample.

\subsection{Exact joint tests and multiplicity control}

After a candidate policy $\pi_j$ has been frozen, calibration asks whether it satisfies both population requirements: selective diagnostic error at most $\alpha$ and autonomous coverage at least $\gamma$. When $\pi_j$ is applied to $n$ calibration episodes, let
\begin{equation}
M_j=\sum_{i=1}^{n}D_{\pi_j}(Z_i),
\qquad
E_j=\sum_{i=1}^{n}
D_{\pi_j}(Z_i)
\mathbf{1}\!\left\{
\widehat{Y}_{T_{\pi_j},i}\neq Y_i
\right\}.
\end{equation}
Here, $M_j$ is the number of episodes receiving an autonomous diagnosis, while $E_j$ is the number of errors among those autonomous diagnoses. Thus, $M_j/n$ is the empirical autonomous coverage and, when $M_j>0$, $E_j/M_j$ is the empirical selective diagnostic error.

A candidate is invalid if either its population risk exceeds $\alpha$ or its population coverage falls below $\gamma$. We therefore test the union null
\begin{equation}
H_j:
\{\risk(\pi_j)>\alpha\}
\ \cup\
\{\coverage(\pi_j)<\gamma\}.
\end{equation}
Rejecting $H_j$ requires evidence against both failure modes: the policy must have sufficiently few autonomous errors and sufficiently many autonomous diagnoses.

Let $F_{\mathrm{Bin}}(k;m,p)$ denote the probability that a $\mathrm{Binomial}(m,p)$ random variable is at most $k$. The corresponding one-sided exact component $p$-values are
\begin{equation}
p_{R,j}
=
F_{\mathrm{Bin}}(E_j;M_j,\alpha),
\qquad
p_{C,j}
=
1-F_{\mathrm{Bin}}(M_j-1;n,\gamma).
\label{eq:pvals}
\end{equation}
The risk $p$-value becomes small when the policy makes unusually few errors relative to the boundary rate $\alpha$, whereas the coverage $p$-value becomes small when it autonomously diagnoses unusually many episodes relative to the boundary rate $\gamma$. We set $p_{R,j}=1$ when $M_j=0$, because an always-defer policy provides no evidence about conditional diagnostic risk. Because both constraints must be supported, the intersection-union test uses
\begin{equation}
p_j=\max(p_{R,j},p_{C,j}).
\label{eq:joint-p}
\end{equation}
This combined value is small only when both component $p$-values are small. For a single pre-frozen policy, no multiplicity adjustment is needed. When several candidates are tested, we use fixed-sequence LTT, Holm's step-down procedure \citep{holm1979simple}, or Bonferroni to control the probability of certifying any invalid candidate. In a prospective study, a rejected candidate is certified under Theorem~\ref{thm:certificate}; here, the same event is called an exploratory calibration pass because the labels are not untouched.

\begin{theorem}[Finite-sample joint control]
\label{thm:certificate}
Assume the calibration episodes are i.i.d. (or exchangeable with the future population), and the complete candidate policies and multiple-testing rule are fixed independently of calibration outcomes. Then each $p_j$ in Eq.~(\plaineqref{eq:joint-p}) is super-uniform under $H_j$.. If the testing rule controls family-wise error at $\delta$, the probability that any certified policy violates either $\risk(\pi)\leq\alpha$ or $\coverage(\pi)\geq\gamma$ is at most $\delta$.
\end{theorem}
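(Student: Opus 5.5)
Since $H_j$ is a union of two failure events, super-uniformity of $p_j=\max(p_{R,j},p_{C,j})$ follows once we show that whichever component null is true has a super-uniform $p$-value. Indeed, $\Pr(p_j\le u)\le\min\{\Pr(p_{R,j}\le u),\Pr(p_{C,j}\le u)\}$, so it suffices to treat the two disjoint cases $\coverage(\pi_j)<\gamma$ and $\risk(\pi_j)>\alpha$ separately. The policy $\pi_j$ and any episode-wise randomization are frozen before calibration. Under i.i.d. sampling, the pairs $(D_{\pi_j}(Z_i),\,D_{\pi_j}(Z_i)\mathbf{1}\{\widehat{Y}_{T_{\pi_j},i}\neq Y_i\})$ are therefore i.i.d. across $i$, with independent randomization draws absorbed into $Z_i$.

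\emph{Coverage case.} Here $M_j\sim\mathrm{Binomial}(n,c)$ with $c=\coverage(\pi_j)<\gamma$. Write $G_m(k)=\Pr(\mathrm{Bin}(m,p)\ge k)$. Then $p_{C,j}=G_n(M_j;\gamma)$ is the upper-tail probability evaluated at the observed count. The standard argument shows that $G(M;p_0)$ is super-uniform when $M\sim\mathrm{Bin}(n,p_0)$: for any $u$, the event $\{G(M)\le u\}$ equals $\{M\ge k_u\}$ for the smallest $k_u$ with $G(k_u)\le u$, and its probability is $G(k_u)\le u$. Stochastic monotonicity of the binomial in $p$ then gives $\Pr_c(M\ge k_u)\le\Pr_\gamma(M\ge k_u)\le u$ for $c<\gamma$.

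\emph{Risk case.} This is the main obstacle, because $M_j$ is random and $p_{R,j}$ is computed conditionally on it. I would condition on $M_j=m$, or more precisely on the set of covered indices. Given that set, exchangeability or independence across episodes implies that the error indicators of the covered episodes are i.i.d. Bernoulli with parameter $\risk(\pi_j)=\Pr(\text{error}\mid D=1)>\alpha$. Hence $E_j\mid M_j=m\sim\mathrm{Bin}(m,\risk(\pi_j))$. The lower-tail analogue of the previous argument gives $\Pr(F_{\mathrm{Bin}}(E_j;m,\alpha)\le u\mid M_j=m)\le u$ for every $m\ge1$. For $m=0$, the convention $p_{R,j}=1$ makes the bound trivial. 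Averaging over $m$ then yields unconditional super-uniformity. I would state the conditional-i.i.d. step carefully. In the i.i.d. case it is immediate from factorizing the joint law. In the exchangeable case I would state it as an assumption relative to the population, namely that calibration and future episodes are i.i.d. draws, since exchangeability alone only identifies $\risk$ as the limiting conditional frequency.

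\emph{Multiplicity.} Let $\mathcal{I}=\{j: H_j \text{ true}\}$, which is deterministic because the family is frozen. The testing rule (fixed-sequence, Holm, or Bonferroni) is fixed before calibration and receives super-uniform $p$-values for the true nulls. By its FWER guarantee, $\Pr(\exists j\in\mathcal{I}\text{ rejected})\le\delta$. A certified policy violates $\risk\le\alpha$ or $\coverage\ge\gamma$ exactly when its index lies in $\mathcal{I}$, which proves the claim. For Bonferroni and fixed-sequence procedures, marginal super-uniformity suffices. For Holm, it also suffices by the closure argument under arbitrary dependence. I would note this explicitly, since the $p_j$ share the same calibration sample and are dependent.
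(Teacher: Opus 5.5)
Your proposal is correct and follows the paper's proof essentially step for step. It uses the intersection--union reduction to whichever component null is true, the marginal $\mathrm{Binomial}(n,c_j)$ upper-tail argument for coverage, conditioning on $M_j=m$ to get $E_j\mid M_j=m\sim\mathrm{Binomial}(m,\risk(\pi_j))$ with the $p_{R,j}=1$ convention at $m=0$, and the FWER property over the frozen family. Your extra details (the explicit $\{M\ge k_u\}$ super-uniformity argument, averaging over $m$, noting that only the i.i.d.\ case is actually used, and checking that Holm and fixed-sequence tolerate the dependence among the $p_j$) just fill in steps the paper states tersely; the one nit is that the two component nulls need not be disjoint, though your argument never uses disjointness.
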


The proof is in Appendix~\ref{app:proof}. The statement is finite-sample and makes no assumption that $r_\theta$ is calibrated.

\subsection{Randomized sparse mixtures}

A finite threshold-and-horizon grid may contain no single deterministic policy that achieves the desired risk--coverage trade-off at minimum cost. We therefore allow episode-wise randomization over the frozen candidate family $\Pi_0=\{\pi_1,\ldots,\pi_L\}$. Let $w_j$ be the probability of selecting policy $\pi_j$, with weights in the probability simplex $\Delta_L:=\{w\in\mathbb{R}_+^L:\sum_{j=1}^{L}w_j=1\}$. The resulting randomized policy $\pi_w$ independently samples $J\sim\operatorname{Categorical}(w)$ at the beginning of each episode and applies $\pi_J$ throughout that episode.

On the selection split, let $\widehat{\cost}_j$ be the empirical mean cost of $\pi_j$, $\widehat c_j=\widehat{\Pr}(D_{\pi_j}=1)$ its empirical autonomous coverage, and $\widehat q_j=\widehat{\Pr}(D_{\pi_j}=1,\widehat Y\neq Y)$ its empirical error mass. When $\widehat c_j>0$, its empirical selective diagnostic error is $\widehat q_j/\widehat c_j$. Because the component is sampled independently for each episode, the mixture's expected cost, error mass, and coverage are $\sum_jw_j\widehat{\cost}_j$, $\sum_jw_j\widehat q_j$, and $\sum_jw_j\widehat c_j$, respectively. We choose the least costly mixture satisfying the selection-stage risk and coverage targets:
\begin{equation}
\begin{aligned}
\min_{w\in\Delta_L}\quad
&\sum_{j=1}^{L}w_j\widehat{\cost}_j \quad
\text{s.t.}\quad
&\sum_{j=1}^{L}w_j
\left(
\widehat q_j-\alpha_{\mathrm{des}}\widehat c_j
\right)
\leq0,
\qquad
\sum_{j=1}^{L}w_j\widehat c_j
\geq\gamma_{\mathrm{des}}.
\end{aligned}
\label{eq:lp}
\end{equation}
The first constraint is equivalent to requiring the mixture's empirical selective diagnostic error, $(\sum_jw_j\widehat q_j)/(\sum_jw_j\widehat c_j)$, to be at most $\alpha_{\mathrm{des}}$; the denominator is positive because the second constraint requires coverage of at least $\gamma_{\mathrm{des}}>0$.

We use the stricter selection-design targets $(\alpha_{\mathrm{des}},\gamma_{\mathrm{des}})=(0.20,0.80)$, while the subsequent calibration tests use $(\alpha,\gamma)=(0.25,0.70)$. These design margins provide a buffer against selection-sample variation but are not themselves a statistical certificate. \begin{proposition}[Sparsity and validity]
\label{prop:sparse-validity}
If the linear program in Eq.~(\plaineqref{eq:lp}) is feasible, it admits an optimal basic feasible solution in which at most three weights $w_j$ are positive. If the mixture weights and episode-wise randomization mechanism are frozen before calibration, the induced randomized controller is a single frozen policy whose risk and coverage can be tested under Theorem~\ref{thm:certificate}, subject to the theorem's sampling assumptions.
\end{proposition}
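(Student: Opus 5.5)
The proof has two independent parts: a linear-programming sparsity argument, and a reduction of the randomized controller to the deterministic setting of Theorem~\ref{thm:certificate}.

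\textbf{Sparsity.} The feasible set of Eq.~(\plaineqref{eq:lp}) is a polyhedron contained in the simplex $\Delta_L$, so it is bounded; if nonempty, it is a polytope. A linear objective over a nonempty polytope attains its minimum, and the minimum is attained at a vertex. To bound the support of a vertex, I will convert the program to standard form. Introduce slacks $s_1,s_2\ge 0$ for the two inequalities, giving the equality system
\[
\sum_j w_j(\widehat q_j-\alpha_{\mathrm{des}}\widehat c_j)+s_1=0,\qquad \sum_j w_j\widehat c_j-s_2=\gamma_{\mathrm{des}},\qquad \sum_j w_j=1,
\]
with $(w,s)\ge 0$. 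This system has three equality rows. A basic feasible solution therefore has at most three nonzero coordinates among $(w_1,\ldots,w_L,s_1,s_2)$, and hence at most three positive $w_j$. The fundamental theorem of linear programming says that if an optimum exists, some optimal solution is basic. The only care needed is that the equivalence between vertices of the original polytope and basic feasible solutions of the standard form holds even when the three rows are linearly dependent; in that case the basis is smaller and the bound only improves. This step is standard, and I will cite it rather than reprove it.

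\textbf{Validity.} The main conceptual step is to show that the randomized controller $\pi_w$ is itself a complete policy that Theorem~\ref{thm:certificate} covers. I will augment each episode with an independent label $J_i\sim\operatorname{Categorical}(w)$, drawn from a mechanism frozen before calibration and independent of $Z_i$. The augmented variables $\tilde Z_i=(Z_i,J_i)$ are then i.i.d. (respectively exchangeable) whenever the $Z_i$ are. On $\tilde Z$, the controller $\pi_w$ is a deterministic map that applies $\pi_{J}$ to $Z$.

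For this augmented policy:
\begin{itemize}
\item $D_{\pi_w}(\tilde Z_i)$ are i.i.d. Bernoulli with mean $\coverage(\pi_w)=\sum_j w_j\,\coverage(\pi_j)$.
\item The error indicators $D\,\mathbf 1\{\widehat Y\ne Y\}$ are i.i.d. with mean $\sum_j w_j q_j$, where $q_j$ is the population error mass of $\pi_j$.
\end{itemize}
Consequently $\risk(\pi_w)$ equals the population ratio of these two means. Conditional on $M$ and on which episodes are autonomous, the errors among the $M$ autonomous episodes are i.i.d. Bernoulli($\risk(\pi_w)$). This is exactly the structure used in the proof of Theorem~\ref{thm:certificate} to make $p_R$ and $p_C$ super-uniform, so that argument applies verbatim with $Z$ replaced by $\tilde Z$.

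The weights $w$ depend only on the selection split, which is disjoint from calibration, so the policy is fixed independently of calibration outcomes. The theorem's hypotheses therefore hold, and the family-wise error conclusion follows whenever $\pi_w$ is tested alone or inside a pre-frozen family.

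The main obstacle is making the augmentation rigorous under exchangeability rather than i.i.d. sampling. Appending independent i.i.d. labels preserves exchangeability of the joint sequence, together with the future episode. I will state this explicitly rather than assume it.
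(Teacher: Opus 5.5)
Your proposal is correct and follows essentially the paper's route: slack variables put the LP in standard form with three equality rows, so an optimal basic feasible solution has at most three positive weights, and augmenting each episode with independent randomization (your $J_i\sim\operatorname{Categorical}(w)$, the paper's $U_i\sim\mathrm{Uniform}(0,1)$ mapped to a component) turns the mixture into a deterministic policy on i.i.d.\ augmented data, to which the binomial argument of Theorem~\ref{thm:certificate} applies verbatim. Your remark that independent labels preserve exchangeability is correct, but the conditional-binomial step you invoke still relies on independence, exactly as the paper's proof of the theorem does.
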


Proposition~\ref{prop:sparse-validity} implies that, although the optimization considers $L$ deterministic candidates, an optimal mixture uses at most three of them. This bound follows from the simplex equality and the two risk and coverage design constraints in Eq.~(\plaineqref{eq:lp}). At deployment, component sampling is independent across episodes and is never conditioned on patient characteristics or intermediate observations. Equivalently, the random seed can be treated as part of each i.i.d.\ episode. A single component draw reused for all calibration episodes would instead introduce shared randomness and would not justify the same exact binomial test. A proof is provided in Appendix~\ref{app:proof}.

\section{Experimental Design}

\subsection{Benchmark and study splits}

We construct a retrospective sequential-diagnosis benchmark by linking credentialed-access MIMIC-IV-ED v2.2 \citep{johnson2023mimiced} with MIMIC-IV-Ext-CDS v1.0.2 \citep{gaber2025extcds}. Retaining the earliest eligible ED stay per patient yields 1,834 patient-level episodes across nine abdominal-pain diagnosis classes. The initial state includes the deidentified history of present illness, chief complaint, demographics, and triage measurements.

The environment exposes 12 action groups spanning repeated vital signs, laboratory studies, electrocardiography, imaging, and microbiology. Continuing along an episode reveals the recorded result of the test proposed by the backbone. Because most discharge-note test snippets lack reliable acquisition timestamps, this environment is a logged retrospective benchmark rather than a causal simulator of alternative testing decisions. An unavailable result is represented by \textsf{NO\_RECORDED\_RESULT} and retains its prespecified cost. The complete action definitions and cost vector are reported in Appendix~\ref{app:benchmark}.

Patients are divided into 1,100 development, 367 calibration, and 367 evaluation episodes, with no patient overlap. The development cohort is further separated into 935 episodes for backbone and risk-ranker fitting and 165 episodes for policy selection. All model fitting, candidate construction, policy ordering, and mixture optimization use only these development partitions. The primary population targets are selective diagnostic error $\alpha=0.25$, autonomous coverage $\gamma=0.70$, and family-wise error level $\delta=0.05$.

\paragraph{Exploratory status.}
Calibration and evaluation labels had been accessed during earlier method development. Consequently, all reported calibration passes, $p$-values, and confidence bounds are interpreted descriptively rather than as realized prospective certificates. The current freeze prevents further outcome-dependent modification but cannot restore statistical independence; applying Theorem~\ref{thm:certificate} confirmatorily requires a new untouched cohort.

\subsection{Model instantiation and comparators}

The diagnostic backbone is Qwen2.5-7B-Instruct \citep{qwen2024qwen25}, adapted with LoRA \citep{hu2022lora} using an official-code-derived LA-CDM training pipeline. Training uses development data only. Because our implementation transfers the official architecture and training structure to a fixed common-path environment, it should not be interpreted as a prompt-equivalent reproduction of LA-CDM's original free-form rollouts. Model configuration, optimization, prompts, seeds, and provenance checks are provided in Appendix~\ref{app:protocol}.

A cross-fitted histogram gradient-boosting ranker estimates state-level diagnostic error from backbone probabilities, uncertainty summaries, trajectory state, accumulated cost, missingness, and the native stopping score. Crossing 10 target coverages with 13 policy horizons produces 130 threshold-and-horizon candidates. The disjoint policy-selection split freezes a tested family of 12 deterministic policies and, separately, one optimized randomized mixture. Candidate construction, ranker hyperparameters, and threshold estimation are detailed in Appendix~\ref{app:protocol}.

Every stopping method receives the same backbone probabilities, diagnoses, proposed actions, and forced-continuation trajectory. This common-path protocol holds diagnosis and acquisition behavior fixed, isolating the decision to stop, continue, or defer. Comparators include HPI-only and full-workup endpoints, fixed-stage and confidence-threshold stopping, the native LA-CDM-style rule with and without confidence deferral, and empirical cost minimization. We additionally evaluate calibration and multiplicity procedures, score and history ablations, and a mixture-weight ablation. These labels describe experimental roles rather than additional \cros{} methods. Complete comparator definitions are provided in Appendix~\ref{app:protocol}.
\paragraph{Method nomenclature and policy construction.}
\cros{} is the overall risk-constrained stopping framework and is instantiated here through exactly two named controllers. \crosdet{} is a deterministic threshold-and-horizon policy selected from the locked development family, and \crosmix{} is an episode-wise randomized, LP-optimized mixture over that same frozen deterministic family. All other experimental labels denote fixed-information, confidence-based, native-agent, or empirical baselines; calibration and multiplicity procedures; ranker ablations; a policy-design ablation; or evaluation modes of \crosmix{}. They are not additional \cros{} methods.

Specifically, \crosdet{} is the lowest-cost member of the full-ranker 130-policy grid that satisfies the locked selection-split design margins, risk at most 20\% and coverage at least 80\%; in the present run it is $(h=3,\tau=0.3286)$. The best deterministic component within the optimized mixture support is identical to \crosdet{} in this run and is therefore not displayed separately. \crosmix{} always denotes the same three support policies and the same nonnegative LP-optimized weights, fitted using only the policy-selection split to minimize expected cost subject to the 20\%/80\% design margins. \crosmix{}, analytic expectation integrates component contributions for each episode, whereas \crosmix{}, realized draw uses one frozen independent episode-wise component draw and yields integer autonomous/error counts for exact binomial testing. These are two evaluation modes of one controller. The \uniformmixture{} uses the same frozen support with equal rather than optimized weights and is a policy-design ablation; it is not guaranteed to satisfy the selection margins.

The \singlecandidate{}, \fixedsequenceltt{}, \holmtesting{}, and \bonferronitesting{} are calibration or multiplicity procedures applied to the same 12 selection-frozen deterministic candidates. They change the testing and return rule, not the risk ranker, backbone, or basic threshold-and-horizon policy. In this run, \holmtesting{} returns the same deterministic controller as \fixedsequenceltt{}, while \bonferronitesting{} returns a different deterministic controller. Empirical ERM instead minimizes selection-split cost without requiring the joint risk--coverage criterion.

Myopic value-of-information and free-form multi-agent systems are not included in the primary paired comparison because they select different actions and therefore induce different trajectories. Appendix~\ref{app:protocol} specifies how such systems would enter a future end-to-end confirmatory study.

\subsection{Evaluation}

The primary outcomes are selective diagnostic error, autonomous coverage, total relative resource cost, and number of requested tests. We also report error mass, $\Pr(D=1,\widehat Y\neq Y)$, which measures the population fraction receiving an incorrect autonomous diagnosis and therefore does not decrease merely because the controller defers additional cases.

Uncertainty estimates respect the patient-level sampling unit. We report exact one-sided Clopper--Pearson bounds for risk and coverage and paired patient-level bootstrap intervals for method contrasts. Randomized mixtures are evaluated both by analytically averaging component contributions and by repeated episode-wise realizations to assess randomization stability. Prespecified sensitivity analyses examine deferral penalties, action costs, recorded-result missingness, diagnosis-language masking, and development-split stability. Full metric definitions, resampling procedures, and sensitivity settings are reported in Appendices~\ref{app:benchmark}--\ref{app:protocol}.

\section{Results}

\subsection{Exploratory joint calibration}

Table~\ref{tab:cros} reports realized calibration outcomes and
descriptive evaluation performance. All returned policies have
evaluation point estimates below 25\% risk and above 70\% coverage.
The prespecified \crosmix{} realization is the least costly displayed
policy, attaining 16.3\% error at 78.5\% coverage with cost 5.68 and
0.68 requested actions.

The \singlecandidate{} and the \crosmix{} realized draw are each
tested once, \fixedsequenceltt{} follows its frozen order, and
\bonferronitesting{} tests 12 candidates; \holmtesting{} returns the
same deterministic controller as \fixedsequenceltt{} in this run.
Appendix~\ref{app:exact} gives the component
tests, exact bounds, weights, and seed audit. Later mixture comparisons
use the \crosmix{} analytic expectation (0.169 risk, 0.788 coverage, 5.57 cost,
and 0.679 actions), rather than the realization in
Table~\ref{tab:cros}.

\begin{table}[H]
\caption{Exploratory calibration and evaluation results
($n_{\mathrm{cal}}=n_{\mathrm{eval}}=367$). ``Auto/err'' denotes
autonomous diagnoses/errors. The table reports raw joint $p$-values
before multiplicity adjustment. The first three rows are testing
procedures applied to the same frozen deterministic family; the final
row is one realized draw of \crosmix{}. Holm testing returns the same
controller as fixed-sequence LTT and is not duplicated.}
\label{tab:cros}
\centering
\scriptsize
\setlength{\tabcolsep}{3.2pt}
\resizebox{\linewidth}{!}{%
\begin{tabular}{lrrrrrrr}
\toprule
Calibration configuration / controller
& Cal auto/err
& Cal risk
& Cal cov.
& Raw joint $p$
& Eval risk
& Eval cov.
& Cost / tests \\
\midrule
\singlecandidate{}
& 282/48 & .170 & .768 & .00210
& .166 & .758 & 17.14 / 3.27 \\

\fixedsequenceltt{}
& 275/43 & .156 & .749 & .02116
& .165 & .760 & 12.43 / 2.09 \\

\bonferronitesting{}
& 288/48 & .167 & .785 & .000442
& .168 & .779 & 13.88 / 2.52 \\

\crosmix{}, realized draw
& 280/51 & .182 & .763 & .00435
& \textbf{.163} & \textbf{.785}
& \textbf{5.68 / .68} \\
\bottomrule
\end{tabular}}
\end{table}

\subsection{Risk-ranker and stopping ablation}

The full ranker achieves state-error AUROC 0.853, compared with 0.715
for the maximum-probability ranker and 0.552 for the
\nativescoreranker{}
(Table~\ref{tab:ranker_ablation}). Among these alternatives, only the
full-ranker controller satisfies the locked selection margins and has
an exploratory calibration pass. The maximum-probability ranker is
cheaper but lacks comparable calibration evidence. The
\nohistoryranker{} nearly matches the full
ranker (AUROC 0.852), so this experiment does not isolate a material
benefit from stage, cost, latency, and missing-history features. The
supported contribution is therefore improved risk ranking and
calibration power, not unconditional cost dominance.

\begin{table}[H]
\caption{Risk-ranker ablation. ``Sel.'' indicates the locked
selection margins. AUROC intervals use 10,000 patient-level resamples;
calibration $p$-values are descriptive.}
\label{tab:ranker_ablation}
\centering
\scriptsize
\setlength{\tabcolsep}{3.1pt}
\begin{tabular}{lclrrrr}
\toprule
Ranker/controller configuration
& Sel.
& State AUROC [95\% CI]
& Cal. $p$
& Eval risk
& Cov.
& Cost / tests \\
\midrule
\crosdet{}, full ranker
& yes & .853 [.820,.885] & .005 & .175 & .809 & 6.47 / .87 \\

\nohistoryranker{}
& yes & .852 [.820,.884] & .040 & .190 & .817 & 6.06 / .83 \\

Entropy-margin ranker
& no & .714 [.678,.750] & .805 & .204 & .828 & 1.72 / .00 \\

Maximum-probability ranker
& no & .715 [.679,.751] & .504 & .185 & .782 & 2.18 / .00 \\

\nativescoreranker{}
& no & .552 [.533,.572] & .902 & .250 & .719 & 31.58 / 6.18 \\
\bottomrule
\end{tabular}
\end{table}

\subsection{Policy and matched comparisons}

\crosmix{}, analytic expectation is 0.903 cost units cheaper than \crosdet{}
(95\% CI $[-1.135,-0.682]$) and requests 0.195 fewer actions; its risk
difference is unresolved and its coverage is 0.021 lower
(Table~\ref{tab:mixture_ablation}). The \uniformmixture{} is another 0.977
units cheaper on evaluation, but misses both locked selection margins.
Thus, optimization enforces the development constraints rather than
guaranteeing the lowest future cost.

Compared with confidence thresholding and native stopping with
deferral, \crosmix{}, analytic expectation reduces risk by 0.088 and
0.063 and cost by 12.54
and 4.84 units, respectively; coverage differences are unresolved.
ERM is cheaper but has higher risk and fails exploratory calibration.
Appendix~\ref{app:paired} reports the complete paired intervals.
Cost advantages persist in the primary, imaging-sensitive, and
missing-result-sensitive scenarios, but not under every deferral
penalty (Appendix~\ref{app:cost_details}).

\begin{table}[H]
\caption{Policy-design ablation using analytic mixture expectations.
``Sel.'' indicates the frozen 20\% risk and 80\% coverage design
constraints, not an evaluation guarantee. The fixed-sequence LTT row
reports the deterministic controller returned by that testing
procedure.}
\label{tab:mixture_ablation}
\centering
\small
\setlength{\tabcolsep}{4.1pt}
\begin{tabular}{lcrrrrr}
\toprule
Policy configuration & Sel. & Risk & Cov. & Error mass & Cost & Tests \\
\midrule
\crosdet{}
& yes & .175 & .809 & .142 & 6.47 & .87 \\

\uniformmixture{}
& no & .172 & .786 & .135 & 4.59 & .50 \\

\crosmix{}, analytic expectation
& yes & .169 & .788 & .133 & 5.57 & .68 \\

\fixedsequenceltt{}
& -- & .165 & .760 & .125 & 12.43 & 2.09 \\

Empirical ERM
& -- & .202 & .850 & .172 & 1.50 & .00 \\
\bottomrule
\end{tabular}
\end{table}

Against the confidence-threshold controller, \crosmix{}, analytic expectation
reduces selective risk by 0.088 and cost by 12.54 units; against native
stopping with deferral, it reduces risk by 0.063 and cost by 4.84
units, with unresolved coverage differences in both comparisons.
ERM remains less costly but has higher risk and fails exploratory
calibration. Complete patient-paired intervals and error-mass
comparisons are reported in Appendix~\ref{app:paired}.

The cost advantage over the principal calibrated and
confidence-based comparators persists under the primary,
imaging-sensitive, and missing-result-sensitive cost scenarios, but
not under every deferral penalty. Appendix~\ref{app:cost_details}
reports the complete frozen-decision sensitivity analysis.

\subsection{Trajectory and robustness audits}

Forced-continuation error is 28.3\% with HPI alone, 27.5\% after one action, 35.7\% at stage 8, and 34.3\% after full workup, while cost and recorded-result missingness increase along the trajectory (Appendix Figure~\ref{fig:stagewise}). Thus, this backbone does not show monotone gains from additional acquisition. Diagnosis-language masking lowers ranker AUROC by approximately 0.014 and modestly reduces coverage and increases cost; risk changes are unresolved (Appendix Table~\ref{tab:mask_full}). Action availability is also highly structured and may encode clinicians' historical ordering. Across 20 development resplits, \crosdet{} nominally passes in 15, whereas \crosmix{} is feasible in 15 and passes in only six. These dependent audits demonstrate design sensitivity, not independent confirmation. Class-level failures further preclude any subgroup safety claim (Appendix~\ref{app:subgroups}).

\section{Discussion and Limitations}
\label{sec:limitations}

The results support a prospectively frozen follow-up, not deployment
or a clinical safety claim. The full ranker outperforms simple scores,
joint tests have apparent power for several frozen policies, and
\crosmix{} reduces cost relative to \crosdet{}. However, the
\nohistoryranker{} nearly matches the full ranker, the
\uniformmixture{} is cheaper despite
missing the selection margins, forced continuation is non-monotone,
and mixture feasibility varies across development splits.

The claims are limited by previously accessed labels; a single-center,
note-derived benchmark without reliable non-vital timestamps or
counterfactual outcomes; informative missingness; unadjudicated labels,
proxy misses, and costs; marginal rather than subgroup control; one
backbone-training seed; and dependent resplit audits. Common-path
evaluation isolates stopping but does not measure end-to-end gains when
agents choose different actions.

A confirmatory study should lock a never-viewed temporal or external
cohort, prespecify labels, costs, subgroup hypotheses, sample size, and
one testing graph, and obtain blinded clinical adjudication. It should
evaluate both common-path stopping and end-to-end acquisition without
changes after calibration access.

\section{Conclusion}

\cros{} makes sequential diagnostic stopping auditable through joint
testing of selective error and autonomous coverage. The current
retrospective results support feasibility and a lower-cost
deterministic--mixture trade-off, but label reuse, structured
missingness, resplit instability, and subgroup failures require a new
prospectively locked study before any safety claim.


\bibliographystyle{plainnat}
\bibliography{references}

\appendix

\section{Proofs}
\label{app:proof}

\begin{proof}[Proof of Theorem~\ref{thm:certificate}]
Fix a candidate $\pi_j$ independently of the calibration outcomes. Conditional on $M_j=m>0$, the number of autonomous errors is $E_j\sim\mathrm{Binomial}(m,r_j)$ under i.i.d. sampling, where $r_j=\risk(\pi_j)$. For the null $r_j>\alpha$, the lower-tail statistic $F_{\mathrm{Bin}}(E_j;m,\alpha)$ is largest at the boundary in the rejection-relevant direction; its discreteness makes it super-uniform. Thus $p_{R,j}$ is valid for $H_{R,j}:r_j>\alpha$. Setting $p_{R,j}=1$ when $m=0$ preserves validity.

Marginally, $M_j\sim\mathrm{Binomial}(n,c_j)$ with $c_j=\coverage(\pi_j)$. Under $H_{C,j}:c_j<\gamma$, the upper-tail value $1-F_{\mathrm{Bin}}(M_j-1;n,\gamma)$ is super-uniform, again with the boundary least favorable in the rejection direction. The candidate null is the union $H_j=H_{R,j}\cup H_{C,j}$. The intersection-union test rejects only if both component tests reject. Therefore, for any distribution in $H_j$, at least one component null is true and
\[
\Pr\{\max(p_{R,j},p_{C,j})\leq u\}
\leq
\Pr\{p_{k,j}\leq u\}\leq u,
\]
where $k$ indexes a true component null. Hence $p_j=\max(p_{R,j},p_{C,j})$ is super-uniform. Applying any valid family-wise-error procedure at level $\delta$ to the finite, pre-frozen family implies that the probability of rejecting at least one true $H_j$ is at most $\delta$. A certified policy violates a desired constraint exactly when its union null is true, proving the result.
\end{proof}

\begin{proof}[Proof of the sparsity and randomized-policy proposition]
Write the linear program in standard form after adding slack variables. The policy-weight vector obeys one simplex equality. At a nondegenerate extreme point, at most two independent design inequalities can be active in addition to the simplex equality. Therefore at most three policy weights need be basic and positive; a degenerate optimum has no larger support, and an optimal basic feasible solution can always be selected.

For validity, augment each episode with $U_i\sim\mathrm{Uniform}(0,1)$, independently across episodes and independent of the clinical variables. The frozen mixture maps $U_i$ to a deterministic component and then applies it to $Z_i$. Thus $(Z_i,U_i)$ are i.i.d. and the mixture is a single fixed randomized policy. Its autonomous indicator and error indicator satisfy the same binomial conditioning argument as in Theorem~\ref{thm:certificate}. The conclusion fails if mixture weights are estimated on calibration outcomes or if one shared random component is drawn for the entire calibration sample.
\end{proof}

\section{Benchmark, Costs, and Ranker Configuration}
\label{app:benchmark}

\paragraph{Cohort construction.}
We link credentialed-access MIMIC-IV-ED v2.2
\citep{johnson2023mimiced} with MIMIC-IV-Ext-CDS v1.0.2
\citep{gaber2025extcds}. Cohort construction retains the earliest
qualifying emergency-department stay for each subject before any data
partitioning, so each patient contributes exactly one episode. The
resulting benchmark contains 1,834 episodes assigned by the frozen
label-mapping code to nine abdominal-pain diagnosis classes:
appendicitis, biliary disease, bowel obstruction, diverticulitis,
gastroenteritis/colitis, nonspecific abdominal pain, pancreatitis,
renal colic, and urinary infection.

The initial presentation includes a deidentified history of present
illness, chief complaint, demographic variables, and triage
measurements. These fields constitute the stage-zero information
available before any action is requested. The executable cohort query,
label mapping, and episode identifiers are versioned as part of the
frozen benchmark artifacts.

\paragraph{Patient-level partitions.}
Patients are partitioned into 1,100 development, 367 calibration, and
367 evaluation episodes. The development set is further divided into
935 ranker-training and 165 policy-selection episodes. All states from
one episode remain in the same partition and, during ranker
cross-fitting, in the same fold. The four resulting patient sets are
therefore disjoint. The 935-episode subset is used to fit the
state-error ranker, whereas thresholds, deterministic candidates,
candidate order, and mixture weights are designed only on the
165-episode selection subset. The calibration and evaluation subsets
are used for the analyses described in the main text. As discussed in
Section~\ref{sec:limitations}, prior access to their labels makes the
present results exploratory rather than confirmatory.

\begin{table}[H]
\caption{Benchmark summary. Here, $\delta$ is the family-wise error
level used by the joint testing procedure.}
\label{tab:benchmark_summary}
\centering
\begin{tabular}{ll}
\toprule
Item & Value \\
\midrule
Unique patients / episodes & 1,834 / 1,834 \\
Development / calibration / evaluation & 1,100 / 367 / 367 \\
Ranker train / policy selection & 935 / 165 \\
Diagnosis classes & 9 \\
Action groups & 12 \\
Maximum action horizon $H$ & 12 \\
Full-workup relative cost & 51.9 \\
Deferral penalty & 10.0 \\
Selective-risk target $\alpha$ & .25 \\
Coverage target $\gamma$ & .70 \\
Family-wise error level $\delta$ & .05 \\
\bottomrule
\end{tabular}
\end{table}

\paragraph{Retrospective action environment.}
The action space comprises rechecking vital signs, complete blood
count, metabolic panel, hepatic panel, lipase, urinalysis,
electrocardiogram, X-ray, ultrasound, computed tomography, magnetic
resonance imaging, and microbiology. The maximum horizon $H=12$
specifies the largest number of action opportunities considered in a
forced-continuation episode; it does not imply that all 12 results are
available in the record.

For each action group, recorded findings are extracted from
discharge-note test snippets. Except for vital signs, these snippets
generally lack acquisition timestamps sufficiently reliable to
reconstruct the clinical order in which results became available.
Consequently, an observation represents information found in the
retrospective record for the requested action group, rather than the
counterfactual result of ordering that action prospectively. When no
corresponding result is available, the environment returns
\textsf{NO\_RECORDED\_RESULT}. This missing-result token is retained as
an observed outcome and may affect subsequent backbone predictions.

The benchmark should therefore be interpreted as a logged-acquisition
environment for comparing stopping rules on a shared backbone
trajectory. It does not estimate how ordering a different test would
alter subsequent care, physiology, documentation, or diagnostic
outcomes. In particular, \cros{} controls whether the next
backbone-proposed action is executed, but it does not replace that
action or simulate an alternative trajectory.

\begin{table}[H]
\caption{Frozen relative action-cost vector shared by every controller.
The full-workup total is the sum of the 12 action costs; the deferral
penalty is separate.}
\label{tab:action_costs}
\centering
\small
\setlength{\tabcolsep}{7pt}
\begin{tabular}{lrlr}
\toprule
Action group & Cost & Action group & Cost \\
\midrule
Recheck vitals & 0.2 & Electrocardiogram & 1.5 \\
Complete blood count & 1.0 & X-ray & 4.0 \\
Metabolic panel & 1.2 & Ultrasound & 6.0 \\
Hepatic panel & 1.2 & Computed tomography & 12.0 \\
Lipase & 1.0 & Magnetic resonance imaging & 20.0 \\
Urinalysis & 0.8 & Microbiology & 3.0 \\
\midrule
\multicolumn{3}{r}{All 12 actions} & 51.9 \\
\multicolumn{3}{r}{Deferral penalty} & 10.0 \\
\bottomrule
\end{tabular}
\end{table}

\paragraph{Cost interpretation.}
An attempted action incurs its Table~\ref{tab:action_costs} cost even
when the retrospective record returns
\textsf{NO\_RECORDED\_RESULT}. The reported cost is a normalized
resource index intended to support controlled comparisons among
stopping policies. It is not a hospital bill, reimbursement amount,
radiation dose, patient utility, or estimate of clinical harm.
Likewise, the deferral penalty represents a stylized downstream-review
cost rather than a measured clinical or monetary quantity.

The frozen-decision sensitivity analysis varies the deferral penalty
over $\{5,10,15,20,25,30\}$ and separately applies multipliers
$\{0.5,1,2\}$ to laboratory, imaging, and attempted-but-missing action
costs according to the frozen category mapping. These analyses retain
the originally frozen stopping decisions and mixture weights; they
recalculate costs but do not refit the ranker, reconstruct thresholds,
or reoptimize the policies.

\paragraph{Risk-ranker inputs.}
Each stage-level ranker target is the binary indicator
$\mathbf{1}\{\widehat{Y}_t\neq Y\}$, which records whether the
backbone's current diagnosis is incorrect. The ranker uses the
backbone's $K$ diagnosis probabilities together with their maximum,
the margin between the two largest probabilities, predictive entropy,
the current stage, the fraction of previously requested actions with
no recorded result, cumulative relative cost, the recorded latency
feature, and the backbone's native stopping score. It does not select
the next action or directly modify the backbone diagnosis.

The 935 ranker-training episodes yield 12,155 stage-level records.
Because states from the same patient are correlated, cross-fitting and
all resampling operations are performed at the episode level rather
than at the state level.

\begin{table}[H]
\caption{Frozen histogram gradient-boosting risk-ranker configuration.
Automatic early stopping follows scikit-learn's implementation.}
\label{tab:ranker_hyperparameters}
\centering
\small
\setlength{\tabcolsep}{6pt}
\begin{tabular}{p{0.34\linewidth}p{0.57\linewidth}}
\toprule
Hyperparameter & Frozen value \\
\midrule
Estimator / loss
& \texttt{HistGradientBoostingClassifier} / log loss \\
Learning rate
& 0.05 \\
Maximum boosting iterations
& 160 (one tree per iteration for binary classification) \\
Maximum leaves / depth
& 15 / no explicit depth cap \\
Minimum samples per leaf
& 50 \\
$L_2$ regularization
& 2.0 \\
Early stopping
& \texttt{auto}; validation fraction 0.10 \\
Patience / tolerance
& 10 iterations / $10^{-7}$ \\
Outer cross-fitting
& Five episode-wise stratified folds, shuffled \\
Random seeds
& Splitter base 20260902; fold fits 20260902--20260906;
final fit 20261002 \\
Realized iterations
& 160 in each outer-fold fit; 149 in the final
12,155-state fit \\
\bottomrule
\end{tabular}
\end{table}

\paragraph{Cross-fitting and final ranker.}
The 935 ranker-training episodes are divided into five stratified
outer folds. For each fold, the estimator is fitted using the other
four folds and generates predictions for all states belonging to the
held-out episodes. This construction prevents states from the same
patient from appearing on both sides of an outer-fold fit. The
resulting out-of-fold scores are used for ranker-development
diagnostics.

Each outer-fold training set contains fewer than 10,000 state records,
so scikit-learn's automatic early-stopping condition is not activated
and all five estimators reach the 160-iteration limit. After
cross-fitting, the final ranker is refitted on all 12,155 state
records. Because this fit exceeds the automatic early-stopping sample
threshold, it uses the internal 10\% validation fraction and stops
after 149 iterations. This final frozen estimator produces the risk
scores used on the disjoint policy-selection, calibration, and
evaluation episodes.

\paragraph{Coverage-indexed threshold construction.}
Thresholds are constructed only on the 165-episode policy-selection
split. For horizon $h\in\{0,\ldots,12\}$ and selection episode $i$,
define the best-so-far risk score and the relevant one-indexed order
statistic as
\[
m_i(h)=\min_{0\leq t\leq h}r_\theta(S_{it}),
\qquad
k_q=\left\lceil q(165-1)\right\rceil+1.
\]
For each target
$q\in\{0.72,0.75,0.78,0.80,0.82,0.85,0.88,0.90,0.93,0.95\}$,
the 165 values $m_i(h)$ are sorted and $\tau_{h,q}$ is set to their
$k_q$th order statistic. This is equivalent to
\texttt{numpy.quantile(..., method="higher")}. The ten $(q,k_q)$
pairs are $(.72,120)$, $(.75,124)$, $(.78,129)$, $(.80,133)$,
$(.82,136)$, $(.85,141)$, $(.88,146)$, $(.90,149)$,
$(.93,154)$, and $(.95,157)$.

These targets are coverage indices used to construct the grid, not
claims about calibration or population coverage. In particular, they
do not define ten global score cutoffs: every pair $(h,q)$ has its own
horizon-specific threshold. Policy $(h,q)$ stops at the first stage
$t\leq h$ satisfying
$r_\theta(S_{it})\leq\tau_{h,q}$ and otherwise defers at $h$. By
construction, its empirical autonomous coverage on the selection split
is at least $q$, with possible excess coverage when risk scores are
tied.

Crossing 13 horizons with 10 coverage indices yields 130 deterministic
candidates. Every threshold is then reused unchanged on calibration
and evaluation episodes. Using only the selection split, the frozen
pipeline additionally determines the tested 12-policy family, its
testing order, and the randomized-mixture weights. Candidate and
mixture design use
$(\alpha_{\mathrm{des}},\gamma_{\mathrm{des}})=(0.20,0.80)$,
whereas the subsequent joint tests use
$(\alpha,\gamma,\delta)=(0.25,0.70,0.05)$.

\section{Detailed Baseline and Masking Results}
\label{app:detailed_results}

\begin{table}[H]
\caption{Calibration and evaluation results for common-path
controllers. Joint $p$ is the raw intersection--union value;
\crosmix{}, realized draw is shown using the prespecified frozen
episode-wise assignment, while
analytic expectations are used in the paired tables.}
\label{tab:baseline_full}
\centering
\small
\setlength{\tabcolsep}{3.5pt}
\begin{tabular}{lrrrrrr}
\toprule
Controller / baseline
& Cal joint $p$
& Eval risk
& Eval cov.
& Cost
& Tests
& Error mass \\
\midrule
\crosmix{}, realized draw
& .00435 & .1632 & .7847 & 5.679 & .679 & .1281 \\
\fixedsequenceltt{}
& .02116 & .1649 & .7602 & 12.426 & 2.087 & .1253 \\
HPI-only
& .99820 & .2834 & 1.0000 & 0.000 & 0.000 & .2834 \\
Full workup
& .99999 & .3433 & 1.0000 & 51.900 & 12.000 & .3433 \\
LA-CDM native
& .99744 & .3079 & 1.0000 & 8.141 & 1.529 & .3079 \\
Fixed stage $h=8$
& .99999 & .3569 & 1.0000 & 36.023 & 8.000 & .3569 \\
Confidence threshold
& .79980 & .2568 & .7956 & 18.109 & 3.635 & .2044 \\
LA native + defer
& .72070 & .2324 & .7738 & 10.403 & 1.529 & .1798 \\
Empirical ERM
& .67940 & .2019 & .8501 & 1.499 & 0.000 & .1717 \\
\bottomrule
\end{tabular}
\end{table}

All controllers in Table~\ref{tab:baseline_full} receive the same
backbone diagnoses, action proposals, native stopping scores, and
forced-continuation observations. Their differences therefore reflect
stopping and deferral rather than alternative test acquisition. The
baseline calibration $p$-values are descriptive and do not imply that
each baseline belongs to the frozen multiplicity-controlled
\cros{} family. ERM is inexpensive because its selected policy
requests no additional action on evaluation, but it fails the joint
calibration criterion.

\paragraph{Diagnosis-language masking.}
An aggregate regex screen identified explicit diagnostic and
future-information language in a subset of the initial presentations.
We therefore froze two ontology-wide, label-independent masking
conditions before rerunning the backbone. The diagnosis-name condition
removes prespecified names and synonyms for all nine diagnosis classes.
The expanded condition additionally removes prespecified diagnostic
and future-information phrases. These conditions match 79/367 and
116/367 evaluation HPIs, respectively.

No backbone parameter, ranker, threshold, horizon, candidate order,
mixture weight, cost, or stopping policy is tuned on the masked
calibration or evaluation outcomes. Every controller within a masking
condition receives the same condition-specific common path.

\begin{table}[H]
\caption{Complete frozen evaluation masking outputs. Every controller
uses the same condition-specific common path. \crosmix{} is reported
in its analytic-expectation evaluation mode.}
\label{tab:mask_full}
\centering
\scriptsize
\setlength{\tabcolsep}{3.0pt}
\resizebox{\linewidth}{!}{%
\begin{tabular}{llrrrrr}
\toprule
Input & Controller / baseline & Selective risk & Coverage & Error mass & Cost & Tests \\
\midrule
Original
& HPI-only & .283 & 1.000 & .283 & .000 & .000 \\
& Full workup & .343 & 1.000 & .343 & 51.900 & 12.000 \\
& LA-CDM native & .308 & 1.000 & .308 & 8.141 & 1.529 \\
& Native + defer & .232 & .774 & .180 & 10.403 & 1.529 \\
& \crosdet{} & .175 & .809 & .142 & 6.469 & .875 \\
& \crosmix{}, analytic expectation & .169 & .788 & .133 & 5.566 & .679 \\
\midrule
Diagnosis names
& HPI-only & .297 & 1.000 & .297 & .000 & .000 \\
& Full workup & .365 & 1.000 & .365 & 51.900 & 12.000 \\
& LA-CDM native & .311 & 1.000 & .311 & 7.981 & 1.520 \\
& Native + defer & .253 & .777 & .196 & 10.215 & 1.520 \\
& \crosdet{} & .192 & .793 & .153 & 7.114 & .962 \\
& \crosmix{}, analytic expectation & .184 & .770 & .142 & 6.069 & .745 \\
\midrule
Names + future
& HPI-only & .300 & 1.000 & .300 & .000 & .000 \\
& Full workup & .371 & 1.000 & .371 & 51.900 & 12.000 \\
& LA-CDM native & .313 & 1.000 & .313 & 7.962 & 1.510 \\
& Native + defer & .252 & .790 & .199 & 10.060 & 1.510 \\
& \crosdet{} & .190 & .790 & .150 & 7.274 & .981 \\
& \crosmix{}, analytic expectation & .182 & .768 & .140 & 6.198 & .757 \\
\bottomrule
\end{tabular}}
\end{table}

Relative to the original inputs, diagnosis-name and expanded masking
change \crosmix{} analytic-expectation selective risk by $+.015$ (95\% CI
$[-.001,.033]$) and $+.013$ $[-.004,.032]$, coverage by $-.018$
$[-.036,-.0004]$ and $-.020$ $[-.040,-.001]$, and cost by $+.504$
$[.066,.968]$ and $+.632$ $[.150,1.130]$, respectively. The
corresponding ranker-AUROC changes are $-.014$
$[-.026,-.005]$ and $-.014$ $[-.026,-.003]$. Full-workup error
changes by $+.022$ $[.003,.044]$ and $+.027$ $[.005,.049]$.

The evaluation ranker AUROCs are .853, .839, and .840 for the
original, diagnosis-name, and expanded masking conditions;
the corresponding calibration AUROCs are .840, .842, and .842.
All masked-minus-original intervals use 10,000 paired patient
resamples. Explicit-language removal therefore weakens ranking and
efficiency without collapsing the frozen controller. However,
deterministic masking cannot remove every implicit cue or establish
that no diagnostic leakage remains. The versioned artifacts include
the regex list, replacement rules, split-level match counts, and
immutable input/output hashes.

\section{Expanded Paired Bootstrap and Trajectory Details}
\label{app:paired}

All paired entries report \crosmix{}, analytic expectation minus the named
comparator. Intervals are 2.5--97.5 percentile intervals from 10,000
patient-level paired bootstrap resamples. All states and outcomes from
one patient are resampled together; states are never resampled
independently. Replicates with no autonomous diagnoses are omitted only
when the selective-risk difference is undefined.

\begin{table}[H]
\caption{\crosmix{}, analytic expectation minus comparator, using 10,000
patient-level resamples: selective risk, coverage, and error mass.}
\label{tab:paired_diagnostic}
\centering
\scriptsize
\resizebox{\linewidth}{!}{%
\begin{tabular}{lrrr}
\toprule
Comparator & Selective risk & Coverage & Error mass \\
\midrule
Confidence threshold
& $-.0877\ [-.1327,-.0439]$
& $-.0077\ [-.0568,.0414]$
& $-.0711\ [-.1115,-.0313]$ \\
LA native + defer
& $-.0633\ [-.1051,-.0220]$
& $+.0141\ [-.0324,.0609]$
& $-.0466\ [-.0833,-.0097]$ \\
\fixedsequenceltt{}
& $+.0043\ [-.0193,.0280]$
& $+.0277\ [-.0032,.0588]$
& $+.0079\ [-.0122,.0281]$ \\
\crosdet{}
& $-.0059\ [-.0132,.0010]$
& $-.0213\ [-.0315,-.0120]$
& $-.0084\ [-.0155,-.0019]$ \\
\uniformmixture{}
& $-.0031\ [-.0104,.0042]$
& $+.0023\ [-.0066,.0112]$
& $-.0021\ [-.0085,.0044]$ \\
Empirical ERM
& $-.0328\ [-.0632,-.0022]$
& $-.0622\ [-.1028,-.0207]$
& $-.0384\ [-.0673,-.0094]$ \\
\bottomrule
\end{tabular}}
\end{table}

\begin{table}[H]
\caption{\crosmix{}, analytic expectation minus comparator: total relative cost,
requested actions, and proxy-miss mass.}
\label{tab:paired_resource}
\centering
\scriptsize
\resizebox{\linewidth}{!}{%
\begin{tabular}{lrrr}
\toprule
Comparator & Total cost & Tests & Proxy-miss mass \\
\midrule
Confidence threshold
& $-12.543\ [-14.953,-10.143]$
& $-2.956\ [-3.424,-2.483]$
& $-.0517\ [-.0878,-.0160]$ \\
LA native + defer
& $-4.837\ [-6.020,-3.654]$
& $-.849\ [-1.004,-.699]$
& $-.0354\ [-.0703,-.0013]$ \\
\fixedsequenceltt{}
& $-6.861\ [-8.065,-5.722]$
& $-1.408\ [-1.604,-1.217]$
& $+.0082\ [-.0109,.0272]$ \\
\crosdet{}
& $-.903\ [-1.135,-.682]$
& $-.195\ [-.228,-.164]$
& $-.0081\ [-.0143,-.0030]$ \\
\uniformmixture{}
& $+.977\ [.766,1.200]$
& $+.175\ [.148,.204]$
& $+.0018\ [-.0037,.0075]$ \\
Empirical ERM
& $+4.067\ [3.247,4.933]$
& $+.679\ [.583,.777]$
& $-.0190\ [-.0444,.0061]$ \\
\bottomrule
\end{tabular}}
\end{table}

These paired comparisons support a narrow cost claim. \crosmix{},
analytic expectation is less costly than \crosdet{}, the deterministic
controller returned by \fixedsequenceltt{}, confidence
thresholding, and native stopping with deferral, but it is more costly
than the \uniformmixture{} and ERM. Its selective-risk difference from
\crosdet{}, the deterministic controller returned by
\fixedsequenceltt{}, and the \uniformmixture{} is unresolved.
ERM is less costly but has higher selective risk and does not pass the
exploratory joint calibration criterion.

\begin{figure}[t]
\centering
\includegraphics[
    width=.84\linewidth
]{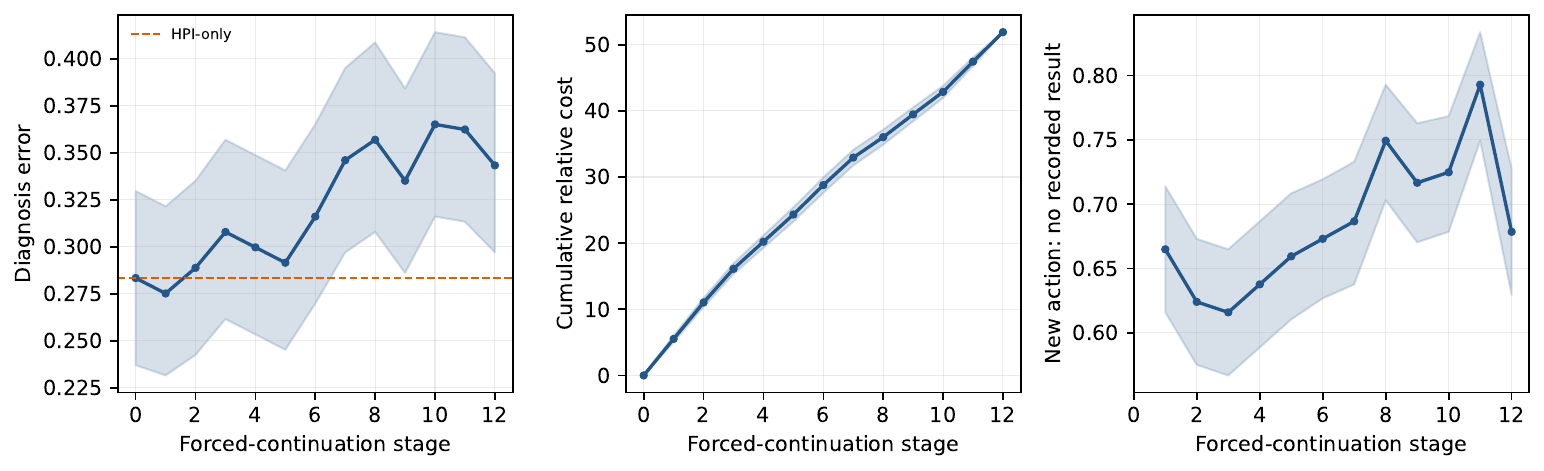}
\caption{Forced-continuation evaluation audit on the same 367 episodes
at every stage. Bands are patient-bootstrap 95\% intervals;
diagnostic error becomes non-monotone as cost and recorded-result
missingness accumulate.}
\label{fig:stagewise}
\end{figure}

\paragraph{Complete forced-continuation trajectory.}
Table~\ref{tab:stagewise_full} reports the numerical trajectory
underlying Figure~\ref{fig:stagewise}. Every row contains the same 367
patients. ``New-action missing'' is the fraction of actions requested
at that stage that return \textsf{NO\_RECORDED\_RESULT}.

\begin{table}[H]
\caption{Forced-continuation evaluation trajectory. All metrics have
patient-bootstrap intervals in the released CSV.}
\label{tab:stagewise_full}
\centering
\scriptsize
\setlength{\tabcolsep}{4.0pt}
\begin{tabular}{rrrrrrr}
\toprule
Stage & $n$ & Error & Macro recall & Cum. cost & Tests
& New-action missing \\
\midrule
0 & 367 & .283 & .621 & .00 & 0 & -- \\
1 & 367 & .275 & .641 & 5.53 & 1 & .665 \\
2 & 367 & .289 & .643 & 11.03 & 2 & .624 \\
3 & 367 & .308 & .610 & 16.12 & 3 & .616 \\
4 & 367 & .300 & .605 & 20.22 & 4 & .638 \\
5 & 367 & .292 & .616 & 24.33 & 5 & .659 \\
6 & 367 & .316 & .593 & 28.77 & 6 & .673 \\
7 & 367 & .346 & .566 & 32.94 & 7 & .687 \\
8 & 367 & .357 & .556 & 36.02 & 8 & .749 \\
9 & 367 & .335 & .575 & 39.47 & 9 & .717 \\
10 & 367 & .365 & .559 & 42.87 & 10 & .725 \\
11 & 367 & .362 & .551 & 47.44 & 11 & .793 \\
12 & 367 & .343 & .571 & 51.90 & 12 & .678 \\
\bottomrule
\end{tabular}
\end{table}

Diagnostic error reaches its minimum after one requested action and
subsequently becomes non-monotone. Because patient composition is
identical across stages, the pattern is not caused by different
patients remaining at later horizons. The degradation occurs alongside
increasing cumulative cost and high recorded-result missingness. It
therefore characterizes the frozen backbone and retrospective
benchmark rather than establishing that clinical testing is generally
harmful.

\section{Exact Bounds and Frozen Frontiers}
\label{app:exact}

\paragraph{Exact component and joint $p$-values.}
Table~\ref{tab:exact_pvalues} reports the risk and coverage components
of the intersection--union test. The joint value is
$p_j=\max(p_{R,j},p_{C,j})$ and is small only when both requirements
receive sufficient evidence.

\begin{table}[H]
\caption{Exact component and joint calibration $p$-values, computed
from unrounded episode counts.}
\label{tab:exact_pvalues}
\centering
\small
\setlength{\tabcolsep}{4.5pt}
\begin{tabular}{lrrrrl}
\toprule
Calibration configuration / controller & Cal auto/err & $p_R$ & $p_C$ & Joint $p$ & Rule \\
\midrule
\singlecandidate{}
& 282/48 & .000850 & .002101 & .002101 & Single \\
\fixedsequenceltt{}
& 275/43 & .000116 & .021160 & .021160 & Fixed sequence \\
\bonferronitesting{}
& 288/48 & .000442 & .000167 & .000442 & Bonferroni \\
\crosmix{}, realized draw
& 280/51 & .004299 & .004354 & .004354 & Single \\
\bottomrule
\end{tabular}
\end{table}

The \singlecandidate{} and the separately frozen \crosmix{} draw are
each tested once at level .05. \fixedsequenceltt{} follows its
prespecified order and passes the reported candidate when that
candidate is reached. \holmtesting{} returns the same deterministic
controller as \fixedsequenceltt{} in this run. \bonferronitesting{}
uses threshold $.05/12=.00417$; its returned controller has unrounded
raw joint value 0.00044157, giving adjusted
$p=\min\{1,12p\}=0.00530$. These are exploratory calibration
calculations because the calibration labels are not prospectively
untouched.

\paragraph{Mixture weights and randomization stability.}
The frozen mixture places weights 0.322, 0.044, and 0.633, rounded to
three decimal places, on
$(h=1,\tau=0.3805)$,
$(h=1,\tau=0.4506)$, and
$(h=3,\tau=0.3286)$, respectively. The printed weights sum to 0.999
because of rounding; sampling and analytic integration use the
full-precision normalized weights.

The displayed \crosmix{} realized draw produces 280 autonomous diagnoses
with 51 errors on calibration and 288 autonomous diagnoses with 47
errors on evaluation. Exact testing uses these realized episode-level
outcomes. Analytic mixture quantities instead integrate each episode's
component contributions over the frozen weights.

\begin{table}[H]
\caption{Two evaluation modes of \crosmix{} and stability across
1,000 episode-wise mixture seeds.
Brackets in the final row give 2.5--97.5 percentiles.}
\label{tab:mixture_stability}
\centering
\small
\setlength{\tabcolsep}{4.5pt}
\begin{tabular}{lrrrr}
\toprule
Summary & Selective risk & Coverage & Cost & Tests \\
\midrule
\crosmix{}, realized draw
& .163 & .785 & 5.68 & .679 \\
\crosmix{}, analytic expectation
& .169 & .788 & 5.57 & .679 \\
1,000-seed mean [percentiles]
& .169 [.159,.179]
& .788 [.774,.801]
& 5.56 [5.22,5.89]
& .679 [.627,.728] \\
\bottomrule
\end{tabular}
\end{table}

All 1,000 evaluation realizations have point estimates below the
numerical risk target and above the numerical coverage target. This
post-freeze analysis measures sensitivity to the episode-wise random
draws; it is not used to select a favorable seed and does not create
1,000 independent calibration experiments.

\paragraph{One-sided exact bounds.}
The following bounds use the realized episode-level outcomes for
randomized policies. Evaluation bounds are descriptive because the
evaluation labels were previously viewed.

\begin{table}[H]
\caption{One-sided 95\% Clopper--Pearson bounds. The
support-restricted deterministic choice equals \crosdet{} and is not
duplicated.}
\label{tab:exact_bounds}
\centering
\scriptsize
\setlength{\tabcolsep}{3.2pt}
\begin{tabular}{lrrrr}
\toprule
& \multicolumn{2}{c}{Calibration}
& \multicolumn{2}{c}{Evaluation (descriptive)} \\
Policy & Risk upper & Cov. lower & Risk upper & Cov. lower \\
\midrule
\crosmix{}, realized draw
& .2243 & .7235 & .2033 & .7464 \\
\crosdet{}
& .2258 & .7321 & .2154 & .7723 \\
\fixedsequenceltt{}
& .1970 & .7093 & .2058 & .7207 \\
LA-CDM native
& .3556 & .9919 & .3500 & .9919 \\
Native + defer
& .3090 & .7493 & .2774 & .7350 \\
Confidence threshold
& .3150 & .7723 & .3024 & .7579 \\
Empirical ERM
& .3037 & .8160 & .2430 & .8160 \\
\bottomrule
\end{tabular}
\end{table}

The bounds concern marginal population risk and coverage and do not
imply disease-class or demographic control.

\paragraph{Frozen evaluation frontiers.}
Figure~\ref{fig:frontier} displays the complete candidate grids for
the supported risk-score families. The marked operating points and
matched-coverage policies are selected using development or selection
data rather than visually favorable evaluation outcomes.

\begin{figure}[H]
\centering
\includegraphics[
    width=\linewidth
]{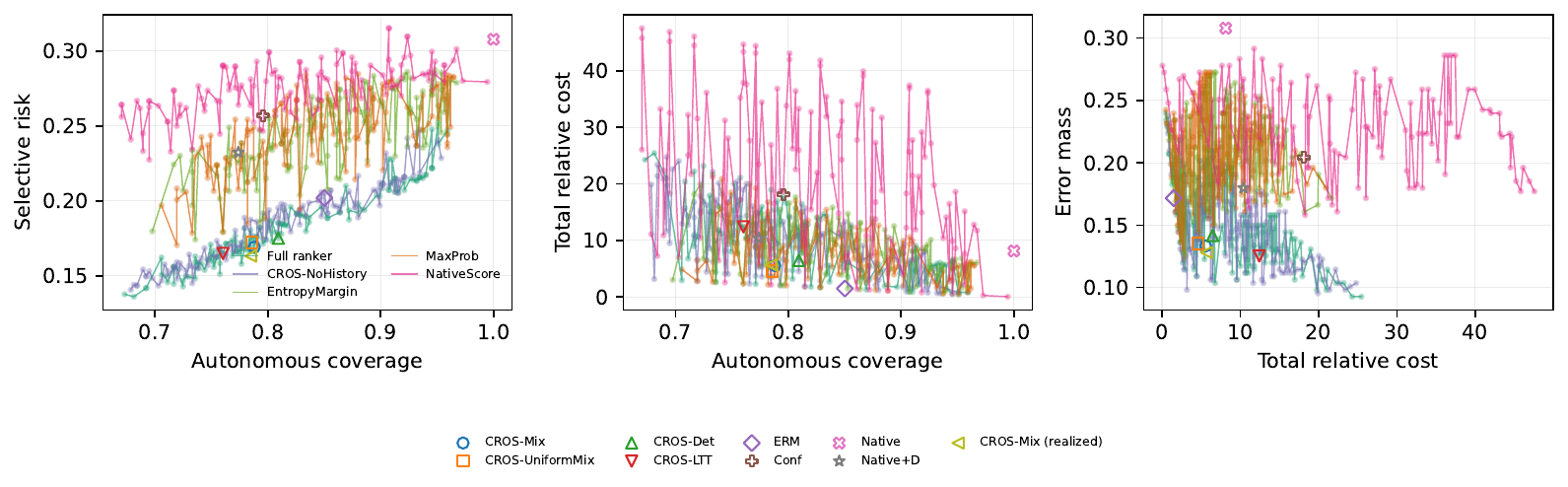}
\caption{Exploratory frozen frontiers. Curves contain the complete
candidate grid for the five supported score families; marked operating
points and the 0.80 matched-coverage policies were selected on
development or selection data.}
\label{fig:frontier}
\end{figure}

\section{Cost, Missingness, and Resplit Details}
\label{app:cost_details}

\paragraph{Frozen-decision cost sensitivity.}
The cost analysis changes the accounting vector while retaining the
original backbone trajectories, stopping decisions, deferral outcomes,
thresholds, and mixture weights. Risk, coverage, and requested actions
therefore remain unchanged.

\begin{table}[H]
\caption{Frozen-decision scenario costs. Low review, primary, and high
review use deferral penalties 5, 10, and 25, respectively. Imaging
$2\times$ and missing $2\times$ double the corresponding action-cost
components.}
\label{tab:cost_sensitivity}
\centering
\scriptsize
\resizebox{\linewidth}{!}{%
\begin{tabular}{lrrrrr}
\toprule
Policy
& Low review
& Primary
& High review
& Imaging $2\times$
& Missing $2\times$ \\
\midrule
\crosmix{}, analytic expectation
& 4.51 & 5.57 & 8.75 & 6.06 & 8.31 \\
\fixedsequenceltt{}
& 11.23 & 12.43 & 16.02 & 13.65 & 20.54 \\
\crosdet{}
& 5.51 & 6.47 & 9.33 & 7.17 & 10.08 \\
Confidence threshold
& 17.09 & 18.11 & 21.18 & 19.25 & 32.09 \\
LA-CDM native
& 8.14 & 8.14 & 8.14 & 9.13 & 14.85 \\
LA native + defer
& 9.27 & 10.40 & 13.80 & 11.39 & 17.11 \\
Empirical ERM
& .75 & 1.50 & 3.75 & 1.50 & 1.50 \\
\bottomrule
\end{tabular}}
\end{table}

For \crosmix{}, analytic expectation, doubling laboratory costs gives
total cost 5.77.
Across the full deferral-penalty grid
$\{5,10,15,20,25,30\}$, the break-even penalty against native stopping
is approximately 22.15. \crosmix{}, analytic expectation remains less
costly than \crosdet{}, the deterministic controller returned by
\fixedsequenceltt{}, confidence thresholding, native stopping, and
native stopping with deferral in the primary, imaging-sensitive, and
missingness-sensitive scenarios. Native stopping becomes less costly
under the deferral penalty of 25, while ERM remains less costly
throughout but fails exploratory calibration.

\begin{figure}[H]
\centering
\includegraphics[
    width=\linewidth
]{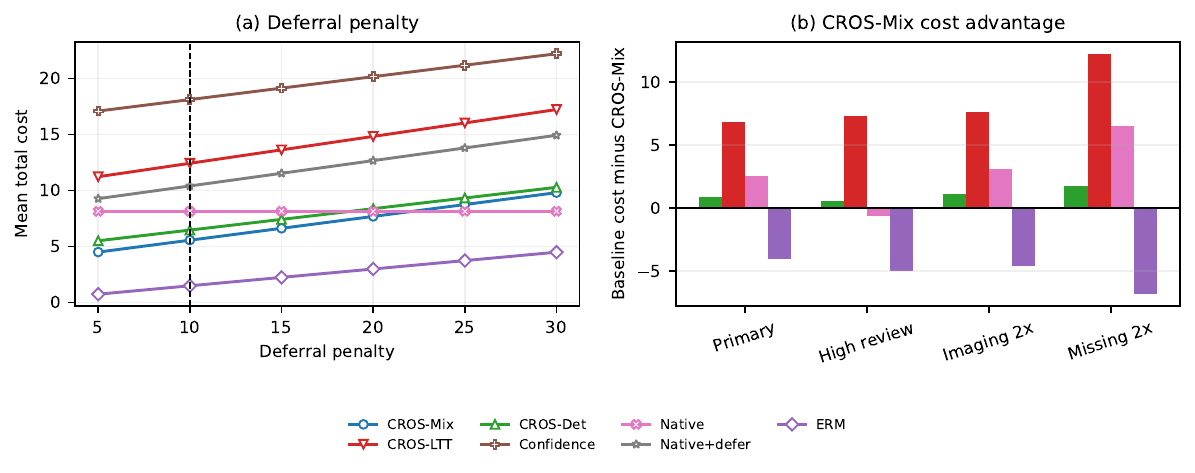}
\caption{Frozen-decision sensitivity. Positive bars in (b) favor
\crosmix{}. Risk, coverage, and requested actions remain fixed; only
cost accounting changes.}
\label{fig:cost_sensitivity}
\end{figure}

\paragraph{Action availability and informative missingness.}
Action availability varies because an action is marked available only
when a corresponding result is found in the retrospective record.

\begin{table}[H]
\caption{Evaluation action availability and final common-path
diagnosis error. Small available or missing-result denominators should
not be overinterpreted.}
\label{tab:action_availability}
\centering
\scriptsize
\begin{tabular}{lrrrr}
\toprule
Action & Available & No result & Err.\ avail. & Err.\ missing \\
\midrule
ECG & 1.9\% & 98.1\% & 28.6\% & 34.4\% \\
MRI & 3.8\% & 96.2\% & 35.7\% & 34.3\% \\
Microbiology & 6.0\% & 94.0\% & 31.8\% & 34.5\% \\
X-ray & 10.1\% & 89.9\% & 21.6\% & 35.8\% \\
Ultrasound & 12.3\% & 87.7\% & 42.2\% & 33.2\% \\
CT & 13.4\% & 86.6\% & 32.7\% & 34.6\% \\
Urinalysis & 18.0\% & 82.0\% & 33.3\% & 34.6\% \\
Metabolic panel & 22.6\% & 77.4\% & 36.1\% & 33.8\% \\
Lipase & 31.9\% & 68.1\% & 41.9\% & 30.8\% \\
Hepatic panel & 62.9\% & 37.1\% & 39.8\% & 25.0\% \\
Recheck vitals & 97.3\% & 2.7\% & 34.2\% & 40.0\% \\
CBC & 97.5\% & 2.5\% & 34.1\% & 44.4\% \\
\bottomrule
\end{tabular}
\end{table}

A logistic model using only action-availability indicators has
diagnosis macro-AUROC 0.597 and state-error AUROC 0.535. Missingness
therefore contains weak information about diagnosis and backbone
error. This is consistent with the record encoding clinicians'
historical ordering behavior, although it does not identify the causal
mechanism producing the missingness.

\paragraph{Development-resplit audit.}
For each of 20 development-only resplits, the 935/165 division is
recreated, the ranker is refitted, and the policies are redesigned
using cached backbone trajectories. The outer calibration and
evaluation cohorts remain unchanged.

\begin{table}[H]
\caption{Twenty development-only repeated splits. Wilson intervals are
reported for rates; metric ranges summarize all available policies.
Infeasible mixture runs remain failures in the reported rates.}
\label{tab:resplit_audit}
\centering
\scriptsize
\resizebox{\linewidth}{!}{%
\begin{tabular}{lrrrrrr}
\toprule
Policy
& Available [95\% CI]
& Sel.\ feasible
& Nominal pass
& Risk mean [range]
& Cov.\ mean [range]
& Cost mean [range] \\
\midrule
\crosdet{}
& 20/20 [.839,1]
& 15/20 [.531,.888]
& 15/20 [.531,.888]
& .185 [.155,.221]
& .829 [.752,.902]
& 5.17 [.98,14.64] \\
\crosmix{}
& 15/20 [.531,.888]
& 15/20 [.531,.888]
& 6/20 [.145,.519]
& .194 [.163,.231]
& .843 [.789,.919]
& 2.33 [.81,4.45] \\
\bottomrule
\end{tabular}}
\end{table}

The \crosmix{} LP is infeasible for seeds 20260911, 20260913,
20260915, 20260923, and 20260924. Failures remain in the denominator
rather than being discarded. The full run-level distribution,
including unsuccessful runs, accompanies the aggregate artifacts.
Because the same outer calibration and evaluation cohorts are reused,
these scores are dependent exploratory checks rather than 20
independent confirmations. The audit was not used to replace or modify
the primary frozen policy.

\section{Disease-Class Audit}
\label{app:subgroups}

Table~\ref{tab:class_audit} reports the complete disease-class results
for the prespecified \crosmix{} realized-draw evaluation seed. The values
are descriptive, are not multiplicity-controlled, and are not
class-conditional certificates.

\begin{table}[H]
\caption{Complete nine-class audit for \crosmix{}, realized draw.}
\label{tab:class_audit}
\centering
\scriptsize
\setlength{\tabcolsep}{3.3pt}
\begin{tabular}{lrrrrrr}
\toprule
Reference class
& $n$
& Autonomous
& Errors
& Coverage
& Selective risk
& Error mass \\
\midrule
Appendicitis
& 89 & 85 & 3 & .955 & .035 & .034 \\
Biliary disease
& 61 & 41 & 3 & .672 & .073 & .049 \\
Bowel obstruction
& 51 & 42 & 2 & .824 & .048 & .039 \\
Diverticulitis
& 26 & 18 & 9 & .692 & .500 & .346 \\
Gastroenteritis/colitis
& 44 & 32 & 6 & .727 & .188 & .136 \\
Nonspecific abdominal pain
& 17 & 10 & 10 & .588 & 1.000 & .588 \\
Pancreatitis
& 36 & 27 & 9 & .750 & .333 & .250 \\
Renal colic
& 10 & 8 & 1 & .800 & .125 & .100 \\
Urinary infection
& 33 & 25 & 4 & .758 & .160 & .121 \\
\bottomrule
\end{tabular}
\end{table}

Coverage is below 70\% for biliary disease, diverticulitis, and
nonspecific abdominal pain. Selective error is 0.500 for
diverticulitis and 1.000 for the small nonspecific-abdominal-pain
subgroup. The denominators are too small to support simultaneous
class-level certificates at $\alpha=.25$ while maintaining useful
coverage.

These results do not contradict the marginal population calculation:
Theorem~\ref{thm:certificate} concerns aggregate selective risk and
coverage unless subgroup constraints are explicitly incorporated into
the frozen testing family. A future confirmatory design should first
certify marginal risk, then test prespecified clinically meaningful or
high-harm strata with an allocated multiplicity budget. Groups lacking
adequate sample size should be reported as unsupported rather than
safe.

\section{Frozen Protocol and Confirmatory Extension}
\label{app:protocol}

\paragraph{Backbone implementation.}
The diagnostic backbone is Qwen2.5-7B-Instruct
\citep{qwen2024qwen25} with LoRA adaptation
\citep{hu2022lora} using rank 8, scaling parameter 16, and dropout
0.1. Following the official LA-CDM implementation at frozen repository
commit \textsf{3f435a1}, adaptation proceeds through decision-agent
GRPO, hypothesis-agent supervised fine-tuning, and
confidence-calibration GRPO. Training runs for four epochs and 3,740
optimizer steps with seed 269.

The rehashed training manifest references only the 935-episode
ranker-training partition and the 165-episode development-selection
partition; the canonical calibration and evaluation files are not
loaded by the backbone-training pipeline. The resulting model produces
the diagnosis distribution, diagnosis proposal, next-action proposal,
native stopping score, and other logged quantities subsequently used
by all stopping controllers. Our implementation is an
official-code-derived transfer of the LA-CDM training structure and
prompt conventions. It should not be interpreted as a
prompt-equivalent reproduction of LA-CDM's original free-form
trajectories.

\paragraph{Common-path stopping protocol.}
For each episode, the frozen backbone is first evaluated along a
maximum forced-continuation trajectory of 12 action opportunities.
At each stage, the trace records the current information state,
diagnosis probabilities, proposed diagnosis, native stopping score,
proposed next action, returned observation, and accumulated relative
cost. Continuing reveals the result associated with the next
backbone-proposed action, including
\textsf{NO\_RECORDED\_RESULT} when no corresponding result is found.
No stopping controller substitutes a different action.

All compared controllers are then applied to the same stored
backbone trajectory. A controller determines only whether to accept
the current diagnosis, continue with the backbone proposal, or defer.
Its requested-action count and cost are truncated at its terminal
stage, with the deferral penalty added when applicable. Thus paired
differences among the primary controllers and comparators isolate stopping and deferral
decisions while holding the backbone's diagnoses, action proposals,
and potential recorded observations fixed. They do not compare
alternative test-acquisition strategies.

\paragraph{Stopping comparators.}
The common-path comparison separates the two formal \cros{}
controllers from baselines, testing procedures, and ablations as
follows.

\paragraph{Fixed-information comparators.} HPI-only accepts the stage-zero diagnosis without requesting an action. Full workup continues through all 12 action opportunities before accepting the terminal diagnosis. The fixed-stage comparator accepts at one common stage chosen on the development selection split. These study-defined controls compare fixed information budgets and are not implementations of external methods.

\paragraph{Confidence-based comparators.} The confidence-threshold controller stops when the backbone's maximum diagnosis probability crosses its selection-frozen threshold and otherwise defers at its horizon. The maximum-probability ranker directly uses maximum class probability as the state-ranking score, while the entropy-margin ranker uses predictive entropy and the gap between the two largest diagnosis probabilities. These are study-implemented confidence heuristics inspired by standard selective prediction and abstention \citep{geifman2017selective,geifman2019selectivenet}; they are not exact reproductions of those papers' training procedures.

\paragraph{Native-agent comparators.} LA-CDM native follows the backbone's own stopping output. Native plus defer applies an additional selection-frozen confidence requirement to the diagnosis produced at the native stopping stage; cases failing that requirement are deferred. The native component is an official-code-derived transfer of LA-CDM \citep{baniharouni2026lacdm} to the fixed common-path setting, not a prompt-equivalent reproduction of its free-form trajectories. Native plus defer is our derivative baseline combining that native stop with confidence-based deferral; it is not a named method from the LA-CDM paper.

\paragraph{Empirical cost minimization.} Empirical ERM selects the policy with lowest empirical mean cost on the policy-selection split without requiring the joint selective-risk and autonomous-coverage criterion used by \cros{}. It tests whether empirical cost optimization alone sacrifices risk control.

\paragraph{Formal \cros{} controllers.} \crosdet{} is the least-cost deterministic threshold-and-horizon candidate satisfying the selection-stage design constraints. \crosmix{} is the LP-optimized episode-wise mixture defined by Eq.~(\plaineqref{eq:lp}) over the frozen deterministic family. In the primary freeze, the best component in the optimized support is identical to \crosdet{} and is not displayed separately.

\paragraph{Calibration and multiplicity procedures.} The \singlecandidate{} evaluates one policy fixed before calibration. \fixedsequenceltt{} applies the prespecified LTT order \citep{angelopoulos2025ltt}; \holmtesting{} applies Holm's step-down rule \citep{holm1979simple}; and \bonferronitesting{} applies the classical Bonferroni threshold to the same frozen 12-policy family. Holm testing returns the same deterministic controller as fixed-sequence LTT in this run, whereas Bonferroni testing returns a different deterministic controller. These procedures alter which candidate is tested or returned, not its ranker, backbone, or basic stopping rule. Pareto Testing is related multi-risk methodology \citep{laufergoldshtein2023pareto}, but is not a separate controller in the reported table.

\paragraph{Ablations.} The \nohistoryranker{} removes stage, missing-result fraction, cumulative cost, and latency while retaining probability-derived features; it tests whether history features improve state-error ranking and stopping. The \nativescoreranker{} directly uses the backbone's native stopping score; it tests whether the learned ranker improves on that score. The \uniformmixture{} assigns equal probability to the same frozen support as \crosmix{}; it tests whether optimized weights improve on support selection alone. These ablations neither retrain nor alter the diagnostic backbone.

\paragraph{Mixture evaluation modes and legacy identifiers.} \crosmix{}, analytic expectation integrates each episode's support contributions, whereas \crosmix{}, realized draw uses one frozen episode-wise component assignment. They are evaluations of one controller, not separate methods. Released machine-readable files retain \texttt{BESTDET}, \texttt{SUPPORTBEST}, \texttt{OPTIMIZEDMIX}, and \texttt{UNIFORMMIX} for artifact compatibility; their display meanings are \crosdet{}, the support component identical to \crosdet{} in this run, \crosmix{}, and the \uniformmixture{}, respectively.

The exact thresholds, horizons, 12-policy testing family, candidate order, and mixture weights are frozen using only the designated development-selection data and are stored in the run manifest. Comparators that select a threshold, stage, horizon, or policy use the same development-selection partition.

\paragraph{Episode-wise mixture randomization.}
For a frozen mixture with weights $w$, one component policy is sampled independently at the beginning of each episode and is retained for the entire episode. Sampling is not conditioned on patient characteristics, backbone predictions, intermediate observations, or calibration outcomes. The base policy seed is 20260902; the displayed calibration and evaluation realizations use the frozen split-specific seeds 20260904 and 20260905, respectively.

The exact binomial test treats \crosmix{}, realized draw as one frozen
policy and uses its realized autonomous and error indicators.
\crosmix{}, analytic expectation instead averages each episode's
contributions over the frozen component weights. It reduces Monte Carlo
noise in descriptive cost and paired comparisons, but is not
substituted for the realized Bernoulli outcomes in the exact calibration
test. The separate 1,000-seed analysis measures randomization stability
and is not used to select a favorable seed. Drawing one global component
and applying it to every calibration episode would create shared
dependence and would not justify the same binomial calculation.

\paragraph{Metrics and uncertainty.}
For a controller producing $M$ autonomous diagnoses and $E$ errors among $n$ episodes, the primary diagnostic quantities are selective error $E/M$, autonomous coverage $M/n$, and error mass $E/n$. Selective error is undefined when $M=0$. Error mass is reported alongside selective error so that changes in the number of deferred cases do not conceal the absolute frequency of autonomous mistakes.

Resource outcomes are total relative cost and the number of attempted actions. An attempted action is counted even when it returns \textsf{NO\_RECORDED\_RESULT}. Proxy-miss mass is the proportion of episodes with a wrong autonomous prediction whose reference label is one of the prespecified disease-specific classes other than gastroenteritis/colitis or nonspecific abdominal pain. This endpoint is descriptive and is not a clinician-adjudicated measure of harm or diagnostic urgency.

Paired contrasts use 10,000 patient-level bootstrap resamples. All states and outcomes belonging to the same patient are resampled together; state-level resampling is never used. Replicates with no autonomous diagnoses are excluded only from contrasts for which selective error is mathematically undefined. Analytic mixture comparisons integrate each patient's component-level contributions before patient-level resampling. A separate 1,000-seed analysis describes the variability of finite mixture realizations.

For deterministic policies and the prespecified \crosmix{} realized
draw, we also report one-sided 95\% Clopper--Pearson upper bounds for
selective risk and lower bounds for coverage. Evaluation bounds remain
descriptive in the present study because those labels were previously
viewed. Frozen-decision sensitivity analyses change the cost vector
without rerunning the backbone, refitting the ranker, reconstructing
thresholds, or changing controller decisions.

\paragraph{Freeze order.}
The recorded pipeline proceeds in the following order: backbone adaptation on development data; episode-wise cross-fitting and final risk-ranker fitting on the 935-patient ranker-training subset; threshold, candidate-family, candidate-order, and mixture design on the disjoint 165-patient policy-selection subset; and finally calibration and evaluation. Candidate ranking and the 12-policy multiple-testing order use selection data only within the recorded pipeline.

The policy, ranker, thresholds, candidate order, mixture weights, cost vector, evaluation code, and randomization mechanism are fixed before the reported calibration and evaluation computations under the current freeze. This procedural freeze makes the reported analyses reproducible, but it does not restore prospective independence after labels have previously been accessed.

\paragraph{Recorded provenance.}
The run stores model and adapter identifiers, repository commit, training and generation configuration, dataset and split hashes, label code, action-cost vector, ranker configuration, all candidate policies, the 12-policy testing family and its order, mixture weights, base seed 20260902, split-specific realization seeds, evaluator version, and per-episode traces. Thresholds are stored numerically rather than recomputed during calibration or evaluation.

Two independently implemented analysis scripts reproduce the reported summary files and verify split disjointness. The audit recomputes zero patient intersections among the 935 ranker-training, 165 policy-selection, 367 calibration, and 367 evaluation partitions. Because cohort construction retains the earliest qualifying stay before partitioning, each episode corresponds to one patient. Derived analysis artifacts contain no patient identifiers.

\paragraph{Separate post-freeze exploratory branch.}
A separately constructed cohort branch uses partitions of
543/85/190/199 episodes and is not pooled with the canonical cohort, used to modify its policies, or included in its primary tables. In that branch, Primary-RobustDet is point-estimate dominated by the branch-specific \crosmix{} in selective risk, coverage, and cost. Full-Ensemble-UCB has AURC 0.1290, compared with 0.1289 for the \nohistoryranker{}, and therefore does not improve the ranking result. Among the corresponding contrasts, only the cost contrast is statistically resolved. We did not continue tuning either variant after observing these results. This branch is reported as negative exploratory evidence, not as an independent confirmation of the canonical analysis.

\paragraph{Required confirmatory freeze.}
Before accessing a new confirmatory cohort, a signed and timestamped manifest must fix the cohort dates, inclusion and exclusion rules, one-episode-per-patient construction, diagnosis-label mapping, timestamp interpretation, handling of missing or unavailable tests, adjudication procedures, harm and fairness strata, action and deferral costs, model and adapter hashes, generation settings, ranker, thresholds, candidate family and order, mixture weights and episode-wise randomization mechanism, $(\alpha,\gamma,\delta)$, multiplicity graph, sample size, stopping rule for data collection, and executable analysis code.

Calibration labels must remain inaccessible to model fitting, candidate construction, policy ordering, mixture optimization, seed selection, and sample-size revision. Any amendment made after calibration outcomes are accessed creates a new exploratory analysis and requires another untouched cohort for a confirmatory claim. Freezing code after examining labels does not recreate the independence assumed by Theorem~\ref{thm:certificate}.

\paragraph{Confirmatory decision rule.}
The confirmatory calibration set should be opened once and evaluated using the preregistered candidate family and testing graph. A policy is certified only if its union null is rejected by the frozen family-wise-error procedure at level $\delta$. If no candidate is rejected, the confirmatory outcome is that no policy is certified; the investigators should not replace it with the empirically best failed candidate.

Any subsequent evaluation cohort should be used only after the calibration decision has been finalized. Its role is to estimate performance, subgroup behavior, resource use, and distribution shift, rather than to revise the certified policy or repeat the certificate test.

\paragraph{Sample-size planning.}
Power must be determined before the confirmatory freeze using development estimates and prespecified worst-case margins. Planning should simulate the joint distribution of autonomous diagnoses and autonomous errors because the selective-risk denominator is itself random. It should also reproduce the intended candidate order, intersection-union tests, multiplicity rule, mixture randomization, and any subgroup or harm constraints.

Although $n=367$ is sufficient to reject several candidate nulls in the present exploratory calculations, it does not establish adequate power under smaller risk or coverage margins, temporal shift, multiplicity, subgroup constraints, or clinician-adjudicated harm endpoints. Sample size must follow the complete testing graph rather than repeated inspection of certificate outcomes. Early stopping, sample-size extension, or repeated testing requires a separately valid sequential design.

\paragraph{End-to-end agent study.}
A secondary experiment should allow each agent to choose its own actions rather than restricting all methods to a common trajectory. The comparison should include fixed-stage, confidence, ERM, myopic value of information, official LA-CDM, LTT, \crosdet{}, \crosmix{}, and compatible released implementations of AgentClinic, MedChain, and DxChain. It should report invalid or unavailable actions, repeated actions, token use, latency, diagnostic accuracy, selective error, error mass, autonomous coverage, resource cost, and trace validity. Clinician adjudication should additionally assess whether requested tests and terminal decisions are medically appropriate.

Because independently acting agents generate different histories, their outcomes cannot be interpreted as paired stopping comparisons on a common information path. Such an experiment evaluates the combined acquisition, reasoning, stopping, and deferral system and therefore addresses a broader question than the primary experiment.

\paragraph{Scope of the theoretical claim.}
Theorem~\ref{thm:certificate} controls the probability of certifying an invalid frozen policy only under the stated exchangeability and independent-calibration assumptions. It does not establish causal clinical benefit, correctness of the reference labels, realism of the relative cost vector, optimality outside the frozen candidate family, subgroup or hospital-level robustness, or deployment safety.

Repeated episodes from the same patient, clinician, or institution would require the sampling unit and exchangeability assumptions to be redefined, together with a corresponding cluster-aware calibration procedure. Settings in which an action changes patient physiology, future observability, clinical management, or documentation require a causal or interactive environment rather than the present logged trajectory. These limitations define the boundary of the theoretical
and empirical claims.

\end{document}